\providecommand\m[1]{\ensuremath{#1}\xspace}
\renewcommand{\m}[1]{\ensuremath{#1}\xspace}
\newcommand{\trval}[1]{\m{\mathbf{#1}}}
	\newcommand{\lrule}{\leftarrow}
	\newcommand{\cause}{\stackrel{c}{\lrule}}
	\newcommand{\ltrue}{\trval{t}}
	\newcommand{\lfalse}{\trval{f}}
	\newcommand{\lunkn}{\trval{u}}
	\newcommand{\Tr}{\ltrue}
	\newcommand{\Fa}{\lfalse}
	\newcommand{\Un}{\lunkn}
	\newcommand{\struct}{\m{I}}
	\newcommand{\rules}{\m{R}}
	\NewDocumentCommand\inter{g+g}{%
	  \IfNoValueTF{#1}
	    {\struct}
	    {\m{#1^{#2}}}}
	\renewcommand{\int}{\m{\mathbb{Z}}}
	\newcommand{\leqp}{\m{\leq_p}}
	\newcommand{\leqt}{\m{\leq_t}}
	\newcommand{\geqt}{\m{\geq_t}}
	\DeclareMathOperator\lfp{lfp}
	\DeclareMathOperator\gfp{gfp}
	\NewDocumentCommand\subs{g+g}{%
	  \IfNoValueTF{#1}
	    {\m{/}}
	    {\m{#1/ #2}}}
\newcommand{\ouracronym}[3]{%
	\newacronym{#1}{#2}{#3}
	\expandafter\newcommand\csname #1\endcsname{\gls{#1}\xspace}%
}
	\def\ifenv#1{
	\def\@tempa{#1}%
	\def\@ttempa{#1*}%
	\ifx\@tempa\@currenvir
	\expandafter\@firstoftwo
	\else
	\expandafter\@secondoftwo
	\fi
	}
	\newcommand{\ddrule}[4]{\ensuremath{#1 \leftarrow #2 & \{#3\} & #4}}
	\newcommand{\drule}[2]{\ensuremath{#1 & \leftarrow & #2}}
	\newcommand{\darule}[4]{\ensuremath{#1 \leftarrow #2 & \{#3\} & #4}}
	\newcommand{\arule}[2]{\ensuremath{#1 \, &\leftarrow \, #2}}
	\newcommand{\LNDRule}[2]{
	\ifenv{array}
	{\drule{#1}{#2}}
	{ \ifenv{align}
		{\arule{#1}{#2}}
		{\ifenv{align*}
		{\arule{#1}{#2}}
		{ERROR: using LDRule in unsupported environment: \@currenvir}
		}
	}
	}
	\newcommand{\LDRule}[4]{
	\ifenv{array}
	{\ddrule{#1}{#2}{#3}{#4}}
	{ \ifenv{align}
		{\darule{#1}{#2}{#3}{#4}}
		{\ifenv{align*}
		{\darule{#1}{#2}{#3}{#4}}
		{ERROR: using LDRule in unsupported environment: \@currenvir}
		}
	}
	}
	\NewDocumentCommand\LRule{m+g+g+g}{%
		\IfNoValueTF{#2}%
		{#1.&}{%
		\IfNoValueTF{#3}
		{\LNDRule{#1}{#2.}}
		{\LDRule{#1}{#2.}{#3}{#4}}%
		}
	}
	\NewDocumentCommand\CLRule{m+g}{%
	\ifenv{array}
	{\cdrule{#1}{#2}}
	{ \ifenv{align}
		{\carule{#1}{#2}}
		{\ifenv{align*}
			{\carule{#1}{#2}}
			{ERROR: using CLRule in unsupported environment: \@currenvir}
		}
	}
	}
	\NewDocumentCommand\carule{m+g}{%
		\IfNoValueTF{#2}
			{\ensuremath{#1.}}
			{\ensuremath{#1 \, &\cause \, #2}}}
	\NewDocumentCommand\cdrule{m+g}{%
		\IfNoValueTF{#2}
			{\ensuremath{#1.}}
			{\ensuremath{#1 & \cause & #2}}}
	\newcommand{\algrule}[4]{
	\hbox{{#1}:}& 
	\quad #2 ~\longrightarrow~ #3 
	\hbox{~ if } #4\\
	}
	\newcommand{\AlgoRule}[4]{
	\ifenv{array}
	{\algrule{#1}{#2}{#3}{#4}}
		{ERROR: using AlgoRule in unsupported environment: \@currenvir}
	}
	\newcommand{\ignore}[1]{}
	\newcommand{\namedcomment}[3]{%
		\ifthenelse{\boolean{nocomments}}%
		{}
		{
			\ifthenelse{\boolean{commentmargin}}%
				{ {\color{#3} \marginpar{\color{#3}\sc #2}#1}  }
				{  {\color{#3} {\sc #2}: #1}  }
		}%
	}
	\newcommand{\mnamedcomment}[3]{\ifthenelse{\boolean{nocomments}}{}{{\marginpar{ \color{#3}{\sc #2}:#1}}}}
\font\uwavefont=lasyb10 scaled 700
\def\spelling{\bgroup\markoverwith{\lower3.5\p@\hbox{\uwavefont\textcolor{Red}{\char58}}}\ULon}
\def\grammar{\bgroup\markoverwith{\lower3.5\p@\hbox{\uwavefont\textcolor{LimeGreen}{\char58}}}\ULon}
\def\phrasing{\bgroup\markoverwith{\lower3.5\p@\hbox{\uwavefont\textcolor{RoyalBlue}{\char58}}}\ULon}
\newcommand\remove{\bgroup\markoverwith{\textcolor{red}{\rule[0.5ex]{2pt}{0.4pt}}}\ULon}
\newcommand\setcitation[2]{%
  \csdef{mycommoncitation#1}{#2}}
\newtheorem{theorem}{Theorem}[section]
\newtheorem{example}[theorem]{Example}
\newtheorem{proposition}[theorem]{Proposition}
\newtheorem{definition}[theorem]{Definition}
\newtheorem{lemma}[theorem]{Lemma}
\newtheorem{corollary}[theorem]{Corollary}
\newtheorem{remark}[theorem]{Remark}
\tikzset{vertex/.style = {shape=circle,draw,minimum size=1em}}
\tikzset{edge/.style = {->,> = latex'}}
\crefname{conjecture}{Conjecture}{Conjectures}
\newcommand\setl[1]{\m{\left\lbrace #1 \right\rbrace}}
\newcommand\setprop[2]{\m{\left\lbrace #1 \; \middle| \; #2 \right\rbrace}}
\newcommand{\tild}{\m{\mathord{\sim}}}
\newcommand{\interp}{\m{\mathcal{I}}}
\newcommand{\be}{\m{\mathcal{B}}}
\newcommand{\besub}[1]{\m{\be_{\mathrm{#1}}}}
\newcommand{\bekk}{\m{\besub{KK}}}
\newcommand{\bewf}{\m{\besub{wf}}}
\newcommand{\best}{\m{\besub{st}}}
\newcommand{\besp}{\m{\besub{sp}}}
\newcommand{\becwf}{\m{\besub{cwf}}}
\newcommand{\becst}{\m{\besub{cst}}}
\newcommand{\F}{\m{\mathcal{F}}}
\newcommand{\jf}{\m{\mathcal{J}\kern-0.2em\mathcal{F}}}
\newcommand{\jfcomplete}[1][\rules]{\m{\left\langle \F, \Fd, #1\right\rangle}}
\newcommand{\js}{\m{\mathcal{J}\kern-0.2em\mathcal{S}}}
\newcommand{\jscomplete}[1][\be]{\m{\left\langle \F, \Fd, \rules, #1\right\rangle}}
\newcommand{\lf}{\m{\mathcal{L}}}
\newcommand{\Fp}{\m{\F_{+}}}
\newcommand{\Fn}{\m{\F_{-}}}
\newcommand{\Fd}{\m{\F_d}}
\newcommand{\Fo}{\m{\F_o}}
\newcommand{\pathstyle}[1]{\mathbf{#1}}
\newcommand{\branch}{\m{\pathstyle{b}}}
\newcommand{\branches}{\m{B}}
\newcommand{\opjf}{\m{O_{\jf}}}
\newcommand{\justifications}{\m{\mathfrak{J}}}
\newcommand{\edges}{\m{E}}
\newcommand{\suppoperator}[1]{\suppvalue\!\!_{#1}}
\newcommand\FOFD{\m{{\rm FO}({\rm FD})}}
\newcommand{\FOL}{\rm{FOL}}
\newcommand{\sel}{\mathcal{S}}
\newcommand{\folfactfull}[2]{\m{\mathfrak{#1}_{#2}}}
\newcommand{\folfact}[1]{\folfactfull{a}{#1}}
\DeclareMathOperator{\sgn}{sgn}
\DeclareMathOperator{\Def}{Def}
\DeclareMathOperator{\Open}{Open}
\DeclareMathOperator{\suppvalue}{SV}
\DeclareMathOperator{\jval}{val}
\DeclareMathOperator{\im}{Im}
\newcommand{\Fdl}{{\m{\F_{dl}}}}
\newcommand{\Fdlindex}[1]{\m{\F^{#1}_{dl}}}
\newcommand{\unfold}[3]{\UNFOLD_{\left(#1,#2\right)}\left(#3\right)}
\newcommand{\Unfold}[1]{\UNFOLD\left(#1\right)}
\newcommand{\Unf}[2]{\UNF_{#1}\left(#2\right)}
\newcommand{\nestedjs}[1][\js^1, \ldots, \js^k]{\left\langle \F, \Fd, \Fdl, \rules, \be, \setl{#1} \right\rangle}
\newcommand\restrict[2]{{%
    \left.\kern-\nulldelimiterspace%
    #1%
    \vphantom{|}%
    \right|_{#2}%
}}
\DeclareMathOperator{\Flat}{Flat}
\DeclareMathOperator{\UNFOLD}{Unfold}
\DeclareMathOperator{\UNF}{Unf}
\DeclareMathOperator{\Compress}{Compress}
\DeclareMathOperator{\Merge}{Merge}
\DeclareMathOperator{\shrink}{Shrink}
\DeclareMathOperator{\expand}{Expand}
\newcommand{\jframe}[1]{\left\{\begin{array}{l}#1\end{array}\right\}}
\newcommand{\leastframe}[1]{\left\lfloor\begin{array}{l}#1\end{array}\right\rfloor}
\newcommand{\greatestframe}[1]{\left\lceil\begin{array}{l}#1\end{array}\right\rceil}
\renewcommand\cite[1]{\citep{#1}}
\begin{document}

\lefttitle{Marynissen et al.}

\jnlPage{1}{32}
\jnlDoiYr{2021}
\doival{10.1017/xxxxx}

\title[On Nested Justification Systems]{On Nested Justification Systems  (full version) \thanks{This research was supported by the Flemish Government in the “Onderzoeksprogramma Artifici\"ele Intelligentie (AI) Vlaanderen” programme and by the FWO Flanders project G0B2221N.}} 

\begin{authgrp}
\author{\sn{Marynissen} \gn{Simon}\footnote{This work is part of the PhD thesis of the first author \cite{phd/Marynissen22}.
}}
\affiliation{KU Leuven}
\affiliation{Vrije Universiteit Brussel}
\author{\sn{Heyninck} \gn{Jesse}}
\affiliation{Vrije Universiteit Brussel}
\affiliation{University of Cape Town and CAIR, South-Africa}
\author{\sn{Bogaerts} \gn{Bart}}
\affiliation{Vrije Universiteit Brussel}
\author{\sn{Denecker} \gn{Marc}}
\affiliation{KU Leuven}
\end{authgrp}


\maketitle


\begin{abstract}
Justification theory is a general framework for the definition of semantics of rule-based languages that has a high explanatory potential. Nested justification systems, first introduced by  \citet{lpnmr/DeneckerBS15}, allow for the composition of justification systems. This notion of nesting thus enables the modular definition of semantics of rule-based languages, and increases the representational capacities of justification theory. As we show in this paper, the original characterization of semantics for nested justification systems leads to the loss of information relevant for explanations. In view of this problem, we provide an alternative characterization of their semantics and show that it is equivalent to the original one. 
  Furthermore, we show how nested justification systems allow representing fixpoint definitions. 
  
  This is an extended version of our paper that will be presented at ICLP 2022 and will appear in the special issue of TPLP with the ICLP proceedings. 

\end{abstract}


\begin{keywords}
justification, modular, knowledge representation
\end{keywords}

\section{Introduction}
\emph{Justification theory} \cite{lpnmr/DeneckerBS15} is a general framework for the definition of semantics of rule-based languages that allows to design languages with high explanatory potential, as the justification-based semantics give an immediate explanation of the truth-value of a fact.
In more detail, a justification is a tree of facts, where children of a given node occur in the body of a rule with this parent node as head. So-called \emph{branch evaluations} specify how to evaluate justifications.

Justification theory is not just useful for defining new logics, but also for \emph{unifying} existing ones: it
captures 
many different KR-languages, including logic programming and abstract argumentation, and their various semantics. By establishing a precise correspondence between semantics of different logics, justification theory sheds light on the common semantic mechanisms underlying these different logics.

\citet{lpnmr/DeneckerBS15} also introduced \emph{nested justification systems}, which allow justification systems to be composed by \emph{nesting} them. Essentially, a nested justification system can be seen as a tree of justification systems, where the subsystems provide sub-definitions for their parent systems. Such nested systems have several benefits. Firstly, they allow for the modular definition of (the semantics of) rule-based languages. For instance, the problem of defining a suitable semantics for logic programs with aggregates is notoriously difficult (witnessed, e.g., by the lack of consensus on semantics for logic programs with aggregates \cite{tplp/PelovDB07,faber2011semantics,gelfond2019vicious,corr/AlvianoFG21,VanbesienBD21}).
Nested justifications allow to separate distinct concerns in logic programs with aggregates:
on the one hand, we can define what is a justification for an aggregate expression and when it is ``acceptable'' (branch evaluation for aggregate expressions). On the other hand, we can define a branch evaluation (e.g., stable or 
well-founded) for rules.
        By nesting the justification system for aggregate expressions inside the justification system for programs, we then obtain a semantics for logic programs with aggregates without any additional effort required. Thus, the modular semantics of nested justification systems allow for the definition of complex KR-languages using a ``divide and conquer''-methodology. Secondly, as different branch evaluations can be used in different sub-systems, nested justification systems allow for the well-behaved combination of different semantics.
This can be useful when modelling the combination of knowledge from different agents that use a different semantics to interpret their respective knowledge bases.
Finally, nested justification systems allow to capture a richer class of logics than non-nested systems (e.g., fixpoint definitions, as we will show in this paper), and nesting thus increases the unifying power of justification theory.

\citet{lpnmr/DeneckerBS15} defined the semantics of nested justification systems by means of an operation called \emph{compression}, which turns an entire justification of the subsystem into a set of facts to be pasted into the supersystem.
However, properties of this characterisation of their semantics were never studied.
Furthermore, as we argue in this paper, the compression-based characterisation of these semantics lead to the loss of explanatory potential, as  information essential for explanations, such as the original rules, is lost.
Therefore, in this work,
 we give an alternative characterisation of the semantics of nested justification systems in terms of a so-called \emph{merging} operator.
Merging retains the original rules
 in the evaluation of a nested justification system, and therefore brings the explanatory potential of justification theory to nested systems.

 The contributions of this paper are as follows.
(1) An expanded exposition and semantic study of\textit{ nested justification systems} and the \textit{compression-based characterisation}  of their semantics. 
(2) The introduction of the \textit{merging-based characterisation} of their semantics. 
(3) A proof of \textit{equivalence} between compression and merging. 
(4) An application of nested justification systems to the representation of \textit{fixpoint definitions} \cite{tplp/HouDD10}.

The merging-based characterisation brings the explanatory potential of standard justification theory to \emph{nested} justification systems.
 This is promising not just for nested fixpoint definitions, which we study in the current paper, but also for other applications of nesting, such as the modular definition of new language constructs.
%
Furthermore, due to the general nature of justification theory, as well as of our characterisations and results, our results also apply to any future branch evaluations. 

 \noindent
 {\bf Outline of the paper}:
        The rest of this paper is structured as follows: necessary preliminaries on justification theory are given in Section \ref{sec:prelim}. In Section \ref{sec:just:systems}, nested justification systems are defined. The first characterisation of semantics for nested justification systems, compression-based characterisation, is recalled and studied in Section \ref{sec:compress}. In Section \ref{sec:merge}, the merging-based characterisation is introduced. These two characterisations are shown equivalent in Section \ref{sec:se}. Nested justification systems are shown to capture fixpoint definitions in Section \ref{sec:fo}. The paper is concluded in view of related work in Section \ref{sec:rel:work}.

\section{Preliminaries}\label{sec:prelim}

We use the formalisation of justification theory of \citet{tplp/MarynissenBD20}. We first give and explain all necessary definitions, and afterwards illustrate them with an example. 

In the rest of this paper, let $\F$ be a set, referred to as a \emph{fact space},
such that $\lf = \setl{\Tr, \Fa, \Un} \subseteq \F$,
where $\Tr$, $\Fa$ and $\Un$ have the respective meaning \emph{true}, \emph{false}, and \emph{unknown}.
The elements of $\F$ are called \emph{facts}.
The set $\lf$ behaves as the three-valued logic with truth order $\Fa \leqt \Un \leqt \Tr$.
We assume that $\F$ is equipped with an involution $\tild: \F \rightarrow \F$ (i.e., a bijection that is its own inverse)
such that $\tild \Tr=\Fa$, $\tild \Un=\Un$, and $\tild x \neq x$ for all $x \neq \Un$.
For any fact $x$, $\tild x$ is called the \emph{complement} of $x$.
An example of a fact space is the set of literals over a propositional vocabulary $\Sigma$ extended with $\lf$ where $\tild$ maps a literal to its negation.
For any set $A$ we define $\tild A$ to be the set of elements of the form $\tild a$ for $a \in A$.
We distinguish two types of facts: \emph{defined} and \emph{open} facts.
The former is accompanied by a set of rules that determine their truth value.
The truth value of the latter is not governed by the rule system but comes from an external source or is fixed (as is the case for logical facts), and only occur in bodies of rules.
\begin{definition}
  A \emph{justification frame} $\jf$ is a tuple $\jfcomplete$ such that
  \begin{itemize}
    \item $\Fd$ is a  subset of $\F\setminus\lf$ closed under $\tild$, i.e., $\tild \Fd = \Fd$; facts in $\Fd$ are called \emph{defined};\footnote{Thus, no logical fact is defined, or, equivalently, the logical facts are opens.}
    \item $\rules \subseteq \Fd \times 2^\F$;
    \item for each $x \in \Fd$, $(x, \emptyset) \notin R$ and there is an element $(x, A) \in R$ for $\emptyset \neq A \subseteq \F$.
  \end{itemize}
\end{definition}
The set of \emph{open} facts is denoted as $\Fo:=\F\setminus\Fd$.
An element $(x, A) \in \rules$ is called a \emph{rule} with \emph{head} $x$ and \emph{body} (or \emph{case}) $A$.
The set of cases of $x$ in $\jf$ is denoted as $\jf(x)$.
Rules $(x, A) \in \rules$ are denoted as $x \gets A$ and if $A=\setl{y_1, \ldots, y_n}$, we often write $x \gets y_1, \ldots, y_n$.

In justification theory, defined facts are evaluated by constructing \emph{justifications} for them. 
Justifications are directed graphs, where the set truth of the (labels of the) children of a node forms a reason (or argument, or cause, depending othe context) for the truth of the (label of the) node itself.
Such reasons are not necessarily convincing: for example if they are based on a parameter that is not true, or might lead to cyclic argumentation. Therefore, branches in justification trees are evaluated using \emph{branch evaluations}.

\begin{definition}
  Let $\jf=\jfcomplete$ be a justification frame.
  A \emph{justification} $J$ in $\jf$ is a (possibly infinite) directed labelled graph $(N, \Fd, \edges, \ell)$, where $N$ are the nodes, $E$ the vertices, and $\ell:N\to \Fd$ is a labelling function, such that (1) the underlying undirected graph is a forest,~i.e., is acyclic; and (2) for every internal node $n \in N$ it holds that $\ell(n) \gets \{\ell(m) \mid (n,m) \in \edges\} \in \rules$.
\end{definition}
We write $\justifications(x)$ for the set of justifications that have a node labelled $x$.   A justification is \emph{locally complete} if it has no leaves with label in $\Fd$.
  We call $x \in \Fd$ a \emph{root} of a justification $J$ if there is a node $n$ labelled $x$ such that every node is reachable from $n$ in $J$.
\begin{remark}
In some works, justifications are formalized as \emph{graphs} and the justifications as defined here are then called \emph{tree-like justifications} \cite{tplp/MarynissenBD20}. Since we restrict attention to the latter, we shall just use the term \emph{justification}. 
\end{remark}

\begin{definition}
  Let $\jf$ be a justification frame.
  A $\jf$-\emph{branch} is either an infinite sequence in $\Fd$ or a finite non-empty sequence in $\Fd$ followed by an element in $\Fo$.
  For a justification $J$ in $\jf$, a $J$-\emph{branch} starting in $x \in \Fd$ is a path in $J$ starting in $x$ that is either infinite or ends in a leaf of $J$.
  We write $\branches_J(x)$ to denote the set of $J$-branches starting in $x$.
\end{definition}
Not all $J$-branches are $\jf$-branches  since they can end in nodes with a defined fact as label.
However, if $J$ is locally complete, any $J$-branch is also a $\jf$-branch.

We denote a branch $\branch$ as $\branch:x_0 \rightarrow x_1 \rightarrow \cdots$ and define $\tild \branch$ as $\tild x_0 \rightarrow \tild x_1 \rightarrow \cdots$.

\begin{definition}
  A \emph{branch evaluation} $\be$ is a mapping that maps any $\jf$-branch to an element in $\F$ for all justification frames $\jf$.
  A justification frame $\jf$ together with a branch evaluation $\be$ form a \emph{justification system} $\js$, which is presented as a quadruple $\jscomplete$.
\end{definition}
A branch evaluation is \emph{parametric}  if every branch is mapped
to an open fact.
A justification system $\jscomplete$ is \textit{parametric} if $\be$ is parametric.

  The \emph{supported} (completion) branch evaluation $\besp$ maps $x_0 \rightarrow x_1 \rightarrow \cdots$ to $x_1$.
  The \emph{Kripke-Kleene} branch evaluation $\bekk$ maps finite branches to their last element and infinite branches to $\Un$.
%
  Let $\jf$ be a justification frame.
  A \emph{sign function} on $\jf$ is a map $\sgn: \Fd \rightarrow \setl{{-}, {+}}$ such that $\sgn(x) \neq \sgn(\tild x)$ for all $x \in \Fd$.
  We denote $\Fn\coloneqq \sgn^{-1}(\setl{{-}})$ and $\Fp\coloneqq \sgn^{-1}(\setl{{+}})$.
From now on, we fix a sign function on $\jf$. We say that an infinite branch has a positive (respectively negative) tail if from some point onwards all elements are in $\Fp$ (respectively $\Fn$).
%
  The \emph{well-founded} branch evaluation $\bewf$ maps finite branches to their last element. It maps infinite branches to $\Tr$ if they have a negative tail, to $\Fa$ if they have a positive tail and to $\Un$ otherwise.
  The \emph{co-well-founded branch evaluation} $\becwf$ maps finite
branches to their last element, infinite branches with a positive tail to $\Tr$, infinite branches with
a negative tail to $\Fa$, and all other infinite branches to $\Un$.
  The \emph{stable} (answer set) branch evaluation $\best$ maps a branch $x_0 \rightarrow x_1 \rightarrow \cdots$ to the first element that has a different sign than $x_0$ if it exists; otherwise $\branch$ is mapped to $\bewf(\branch)$.

The final ingredient of the semantics of justification systems are \emph{interpretations}, which are abstractions of possible states of affairs, formalized as an assignment of a truth value to each fact.
A \emph{(three-valued) interpretation} of $\F$ is a function $\interp:\F \rightarrow \lf$ such that $\interp(\tild x) = \tild \interp(x)$ and $\interp(l) = l$ for all $l  \in \lf$.
  The set of interpretations of $\F$ is denoted by $\mathfrak{I}_\F$.
We will call an interpretation \emph{two-valued} if $\interp(x)\neq\lunkn$ for all $x\neq\lunkn$.
%
Given an interpretation $\interp$ and a justification system $\js$, we can now 
evaluate the quality of justifications: the value assigned to a justification will be the least (in the truth order) value of its branches. The rationale behind this definition is that for a justification to be ``good'', all of the arguments it contains should be good as well. 
On top of this definition, we will define a notion of supported value of a fact, which is the value of its best justification. Indeed, in general we will not be interested in the existence of \emph{bad} arguments why a fact holds, but only in the existence of its best arguments.

  Let $\js=\jscomplete$ be a justification system, $\interp$ an interpretation of $\F$, and $J$ a locally complete justification in $\js$.
  Let $x \in \Fd$ be a label of a node in $J$.
  The \emph{value} of $x \in \Fd$ by $J$ under $I$ is defined as $\jval(J, x, \interp) = \min_{\branch \in \branches_J(x)} \interp(\be(\branch))$,
  where the minimum is taken with respect to $\leqt$.
  The \emph{supported value} of $x \in \F$ in $\js$ under $\interp$ is defined as
  \begin{equation*}
  \suppvalue(x, \interp) = \max_{J \in \justifications(x)} \jval(J,x,\interp)\text{  for $x \in \Fd$  \qquad $\suppvalue(x, \interp)=\interp(x)$ for $x \in \Fo$.
  }
  \end{equation*}

\begin{definition}
  Let $\js=\jscomplete$ be a justification system.
  An $\F$-interpretation $\interp$ is a $\js$-model if for all $x \in \Fd$, $\suppvalue_{\js}(x, \interp) = \interp(x)$.
  If $\js$ consists of $\jf$ and $\be$, then a $\js$-model can be referred to as a $\be$-model of $\jf$.
\end{definition}

%


\begin{definition}
  Two justification systems $\js_1$ and $\js_2$ are \emph{equivalent} if $\suppoperator{\js_1}=\suppoperator{\js_2}$.
\end{definition}



\begin{example}\label{ex:justification:system}
Consider the justification system $\jfcomplete$ with
$\Fd=\setl{p,\tild p,q,\tild q}$, $\F=\Fd\cup\{r,\tild r\}\cup{\cal L}$ and $\rules=\{p\gets \tild q,r; q\gets q; \tild p\gets q;\tild p\gets  \tild r; \tild q\gets \tild q\}$.


We have the following $\jf$-branches (we denote branches compactly in a graph-like fashion, i.e.\ a loop like $\branch_3$ denotes the infinite branch $q\rightarrow q\rightarrow \ldots$):

 \begin{center}
    \begin{tikzpicture}[transform shape]
        \node (l1) at (0.2,0.5) {$\branch_1:$};

        \node (p) at (1, 1) {$p$};
        \node (tq1) at (0.5, 0) {$\tild q$};
        \node (r) at (1.5, 0) {$r$};
        \node (l2) at (1.7,0.5) {$:\branch_2$};

        \node (l3) at (2.7,0.5) {$\branch_3:$};
        \node (q) at (3.5, 1) {$q$};

        \node (l4) at (5.1,0.5) {$\branch_4:$};
        \node (tp1) at (5.5, 1) {$\tild p$};
        \node (rp) at (5.5, 0) {$\tild r$};

        \node (l5) at (7.1,0.5) {$\branch_5:$};
        \node (tp2) at (7.5, 1) {$\tild p$};
        \node (qp) at (7.5, 0) {$q$};

        \node (l6) at (8.7,0.5) {$\branch_6:$};
        \node (tq) at (9.5, 1) {$\tild q$};

    \tikzstyle{EdgeStyle}=[style={->}]
    \Edge(p)(tq1)
    \Edge(p)(r)

	\Loop[dist=1.0cm,dir=SO](q)(q)

	    \Edge(tp1)(rp)

	    \Edge(tp2)(qp)
	    	\Loop[dist=1.0cm,dir=SO](qp)(qp)

	    	\Loop[dist=1.0cm,dir=SO](tq)(tq)
	    		\Loop[dist=1.0cm,dir=SO](tq1)(tq1)

    \end{tikzpicture}
  \end{center}
These branches are evaluated as follows:
%
%
%
\begin{center}
\def\arraystretch{0.8}
\begin{minipage}{0.46\textwidth}
 \begin{tabular}{@{\extracolsep{\fill}}lrrrrrr}
   $i$ & 1 & 2 & 3 & 4 & 5 & 6  \vspace{5pt}\midrule
$\besp(\branch_i)$ & $\tild q$ & $\phantom{\tild}r$ &$\phantom{\tild}q$ & $\tild r$ & $\phantom{\tild}q$ & $\tild q$\\
$\bewf(\branch_i)$& $\phantom{\tild}\Tr$ & $\phantom{\tild}r$ &$\phantom{\tild}\Fa$ & $\tild r$ & $\phantom{\tild}\Fa$ & $\phantom{\tild}\Tr$\\
$\becwf(\branch_i)$& $\phantom{\tild}\Fa$ & $\phantom{\tild}r$ &$\phantom{\tild}\Tr$ & $\tild r$ & $\phantom{\tild}\Tr$ & $\phantom{\tild}\Fa$
  \end{tabular}
\end{minipage}
\begin{minipage}{0.08\textwidth}
\end{minipage}
\begin{minipage}{0.46\textwidth}
  \begin{tabular}{@{\extracolsep{\fill}}lrrrrrr}
   $i$ & 1 & 2 & 3 & 4 & 5 & 6    \vspace{5pt}\midrule
 $\bekk(\branch_i)$& $\phantom{\tild}\Un$ & $\phantom{\tild}r$ &$\phantom{\tild}\Un$ & $\tild r$ & $\phantom{\tild}\Un$ & $\phantom{\tild}\Un$\\
$\best(\branch_i)$ & $\tild q$ & $\phantom{\tild}r$ &$\phantom{\tild}\Fa$ & $\tild r$ & $\phantom{\tild}q$ & $\phantom{\tild}\Tr$\\
$\phantom{x}$
  \end{tabular}
\end{minipage}
\end{center}

Let ${\cal I}$ be the interpretation with ${\cal I}(r)={\cal I}(p)={\cal I}(\tild q)=\Tr$, 
$J$ the justification for $p$ made up of $\branch_1$ and $\branch_2$,
and $\jf=\langle \F,\Fd,\rules,\bewf\rangle$.
We see that, e.g., $\jval(J, p, \interp) = \min_{\branch \in \branches_J(p)} \interp(\bewf(\branch))=\Tr$, and thus that $\suppvalue(p, \interp)= \interp(p)$.
In fact, it can be verified that this holds for every literal, i.e., $\interp$ is a $\jf$-model.
In $\interp$ (still, under the well-founded semantics),  $J$ serves as an explanation as to why $p$ is true: because $\tild q$ and $r$ are true.
Simultaneously, $J$ also explains why the facts $r$ and $\tild q$ are true. In the case of $r$, this is simply because it is an open fact (with value true). For $\tild q$, this explanation looks self-supporting: the reason why $\tild q$ holds is because $\tild q$ holds. For negative facts, stable and well-founded semantics accept such cyclic branches: the reason is that this cycle actually represents the fact that the positive fact ($q$) can only be justified by cyclic justifications, and well-founded and stable semantics reject cyclic justifications for positive facts.
\end{example}

\section{Nested Justification Systems}\label{sec:just:systems}
As outlined in the introduction, nested justifications, originally introduced by \citet{lpnmr/DeneckerBS15}, allow for the modular definition of semantics, the representation of richer classes of logics and the combination of different semantics in different modules. In this section, we introduce nested justification systems formally.

A nested justification system is essentially a tree structure of justification systems, meaning, that some systems are local to certain others,
i.e.\ some facts occurring as an open fact in one component system are defined in another component system.

\begin{definition}\label{def:nested}
  Let $\F$ be a fact space.
  A \emph{nested justification system} on $\F$ is a tuple $\langle \F, \Fd, \Fdl, \rules, \be, \setl{\js^1, \ldots, \js^k}\rangle$ such that:
  \begin{enumerate}
    \item $\langle \F, \Fdl, \rules, \be \rangle$ is a justification system;
    \item for each $i$, $\js^i$ is a nested justification system $\langle \F^i, \Fd^i, \Fdlindex{i}, \rules^i, \be^i, \ldots\rangle$; 
    \item $\Fd$ is partitioned into $\setl{\Fdl, \Fd^1, \ldots, \Fd^k}$;
    \item $\F = \cup_{i = 1}^k \F^k$;
    \item \label{def:openfactsofnestedsystem} $\Fo^i \subseteq \Fo \cup \Fdl$ (where $\Fo^{i} = \F^{i} \setminus \Fd^{i}$ as usual)

  \end{enumerate}
A nested justification system is called \emph{parametric} if \be is parametric and all of its subsystems are parametric. 
  We call a nested justification system \emph{compressible} if for each $i$, $\js^i$ is parametric.
\end{definition}
Given $\js=\langle \F, \Fd, \Fdl, \rules, \be, \setl{\js^1, \ldots, \js^k}$, we call (for $i=1,\ldots,k$) $\js$ the \textit{parent system of $\js_i$} and $\js_i$ a \textit{child system of $\js$}. 
Ancestor and descendant systems are defined analogously by transitively closing the parent, respectively, child relation. 
This defines a tree of nested justification systems, where the leaves have $\Fdl = \Fd$ and $k=0$, and thus correspond directly to an unnested justification system.

The factspace $\F$ consists of all the facts used in (some component system of) $\js$.
The facts in \Fdl\xspace are those that are defined \emph{locally} in the justification system, i.e., in the rules $R$. 
The facts in $\Fd$ are those that are either defined locally, or in some component system of $\js$. 
Every defined fact is either defined locally in the top system ($\Fdl$), or in one of the subsystems ($\Fd^i$).
Each child system $\js^i$ can use as opens only the opens of the root and the facts defined locally in the root.
This has the consequence that facts defined in $\js^i$ do not appear as opens in $\js^j$ if $i \neq j$.


\begin{lemma}\label{lemma:stratified:facts}
  Let $\js=\nestedjs$ be a nested justification system.
  If $i \neq j$, then $\Fd^i \cap \F^j = \emptyset$.
\end{lemma}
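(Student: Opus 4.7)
The plan is to decompose $\F^j$ into its defined and open parts and show that $\Fd^i$ has empty intersection with each. Write $\F^j = \Fd^j \cup \Fo^j$.

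First, I would handle the intersection $\Fd^i \cap \Fd^j$. This follows immediately from condition~3 of \Cref{def:nested}, which states that $\setl{\Fdl, \Fd^1, \ldots, \Fd^k}$ is a partition of $\Fd$; since $i \neq j$, the members $\Fd^i$ and $\Fd^j$ of this partition are disjoint.

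Next, I would handle the intersection $\Fd^i \cap \Fo^j$. By condition~5 we have $\Fo^j \subseteq \Fo \cup \Fdl$, so it suffices to show $\Fd^i \cap \Fo = \emptyset$ and $\Fd^i \cap \Fdl = \emptyset$. The first holds because $\Fd^i \subseteq \Fd$ (again by condition~3) and $\Fo = \F \setminus \Fd$ by definition. The second is again a direct consequence of condition~3, as $\Fdl$ and $\Fd^i$ are distinct cells of the same partition.

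Combining both subcases yields $\Fd^i \cap \F^j = \emptyset$, which is what we wanted. There is no real obstacle here: the lemma is essentially an unpacking of the partition requirement in condition~3 together with the scoping restriction on opens in condition~5, so the proof is a short chain of set-theoretic observations rather than anything requiring new ideas.
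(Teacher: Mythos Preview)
Your proof is correct and follows essentially the same approach as the paper's: both decompose $\F^j$ into $\Fd^j \cup \Fo^j$, dispatch $\Fd^i \cap \Fd^j$ via the partition in condition~3, and reduce $\Fd^i \cap \Fo^j$ to condition~5. The only difference is that the paper's proof is terser, leaving the unpacking of condition~5 (into $\Fd^i \cap \Fo = \emptyset$ and $\Fd^i \cap \Fdl = \emptyset$) implicit, whereas you spell it out explicitly.
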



\begin{example}\label{running:example:1}
Let $\js= \nestedjs[\js^1]$ be a nested system with $\F = \setl{p,q,r,\tild p,\tild q,\tild r}\cup {\cal L}$, $\Fdl = \setl{r, \tild r}$, $\be=\bekk$, and $$\rules=
\jframe{
\begin{matrix*}[l]
r\gets p,q; &
\tild r\gets \tild p;& \tild r\gets \tild q\\
\end{matrix*}
}.$$
The inner system $\js^1$ is equal to the unnested system with
 $\Fdlindex{1} = \setl{p,\tild p,q,\tild q}$, $\be^1 = \bewf$, and
 $$\rules^1
= \jframe{
\begin{matrix*}[l]
p\gets \tild q,r ;&  \tild p\gets q;& \tild p \gets \tild r;&
q\gets  q;& \tild q\gets \tild q
\end{matrix*}
}.$$


We summarize this nested justification system graphically as follows:
  \begin{equation*}
\bekk:  \jframe{\begin{matrix*}[l]
r\gets p,q; &
\tild r\gets \tild p;& \tild r\gets \tild q\\
\end{matrix*}\\
\bewf
\jframe{
\begin{matrix*}[l]
p\gets \tild q,r ;&  \tild p\gets q;& \tild p \gets \tild r;&
q\gets  q;& \tild q\gets \tild q
\end{matrix*}

  }}.
  \end{equation*}
 This example also illustrates how different branch evaluations (e.g.,
 $\bekk$ and $\becwf$) can be combined in nested justification systems.
\end{example}

One notoriously difficult problem is the integration of aggregates in non-monotonic semantics of logic programming \cite{tplp/PelovDB07,faber2011semantics,gelfond2019vicious,corr/AlvianoFG21,VanbesienBD21}. 
While we do not develop the full theory here, we conjecture that nested justification systems can shed light on the relations between different semantics. 
As a first step towards this goal, we will show on a single example how different semantics for aggregates can be obtained by plugging in a different nested system defining the aggregate atoms in question. 
The outer system will always simply be evaluated under the stable semantics. 

\begin{example}\label{ex:aggregates:1}
We consider a representation of an aggregate ${\tt atLeastTwo}$ expressing that at least two atoms among $p$, $q$ and $s$ are true. 
%
%
%
\[
\small 
\js_{\mathit{FLP}} = 
\best :\jframe{
\begin{matrix*}[l]
p\gets\ltrue;& q\gets\ltrue;& s\gets p,{\tt atLeastTwo};
\end{matrix*} \\
\bekk:
\jframe{\begin{matrix*}[l]
{\tt atLeastTwo}\gets p,q;&
{\tt atLeastTwo}\gets s,q;&
{\tt atLeastTwo}\gets p,s\end{matrix*}
}
}
\]
%
\[
\small
\js_{\mathit{GZ}} = 
\best :\jframe{\begin{matrix*}[l]
p\gets\ltrue;& q \gets\ltrue;& s\gets p,{\tt atLeastTwo};
\end{matrix*} \\
\bekk:
\jframe{\begin{matrix*}[l]
{\tt atLeastTwo}\gets p,q,\tild s;&
{\tt atLeastTwo}\gets s,q,\tild p;\\
{\tt atLeastTwo}\gets p,s,\tild q;&
{\tt atLeastTwo}\gets p,q,s
\end{matrix*}
}
}
\]
For brevity and for emphasizing the relation with logic programming, we have here only written down the rules for \emph{positive} facts. The rules for negative facts can be obtained automatically by means of a technique called \emph{complementation}; see Appendix \ref{appendix:additional:preli} for details on this.
These systems differ only in their inner justification system, i.e., in which justifications for the atom ${\tt atLeastTwo}$ they accept. 
The inner system is evaluated here (in both cases) under Kripke-Kleene semantics; however, since it is a non-recursive system, all major branch evaluations we discussed coincide.
These two justification systems are inspired by the logic program 
\[p.\qquad q. \qquad s\lrule p,\  \#\{p, q, s\}\geq 2. \]
which has one (two-valued)  stable model, namely $\{p,q,s\}$, according to the FLP semantics \cite{faber2011semantics} but no (two-valued) stable models according to the GZ semantics \cite{gelfond2019vicious}. We will show later that this is indeed what the semantics for nested justifications for the first respectively the second system. 
\end{example}

In the following two sections, two different characterisations of the semantics for nested justification systems are given. In a nutshell, these characterisations both describe how to convert a nested justification system in a non-nested justification system. The first characterisation,  called the \emph{compression-based} characterisation, keeps the branch evaluation of the top system, but manipulates the rules to ``squeeze in'' all information about subsystems (their branch evaluation as well as their rules), while the second, \emph{merging-based} characterisation, keeps the rules of the original system intact, but creates a new branch evaluation based on all branch evaluations present in the system.



\section{Compression-based Characterisation of Semantics for Nested Systems}\label{sec:compress}
The semantics for nested justification systems was originally introduced by \citet{lpnmr/DeneckerBS15}. The basic idea is to \textit{compress} a nested justification system in a regular justification system, starting from the leaves of the nesting tree and iteratively moving up. In a \emph{compressible} nested system, the branch evaluations of subsystems are parametric and hence we know
(by Proposition 3 of \citet{ijcai/MarynissenBD21})
that, if the interpretation of the open facts is fixed, there is a unique model in which the value of a fact depends solely on the values of the opens. Therefore, the model can be represented by a set of rules for which the body contains only open facts. This representation is formed by transforming each justification into a single rule, by an operation called \emph{flattening}.
%

\begin{definition}
  Let $\js = \jscomplete$ be a 
  justification system.
  The \emph{flattening} $\Flat(\js)$ is the justification system $\langle \F, \Fd, \rules^f, \be \rangle$, where
  \begin{equation*}
  \rules^f = \setprop{x \gets A}{x \in \Fd, J \in \justifications_x, A = \setprop{\be(\branch)}{\branch \in \branches_J(x)}}.
  \end{equation*}
\end{definition}
Intuitively, $\Flat(\js)$ is obtained by constructing a rule $x\gets A$ for every justification $J$ for $x$, where $A$ is obtained by simply taking the facts to which the branches in $J$ starting in $x$ are mapped by the branch evaluation $\be$. 

\begin{example}[Example \ref{running:example:1} ctd.]\label{running:example:2}
In view of Example \ref{ex:justification:system}, the flattening of  the inner justification system $\js^1$ has the following rules:
$$(\rules^1)^f=
\jframe{
\begin{matrix*}[l]
p\gets \Tr,r;& \tild p\gets \Fa;& \tild p \gets \tild r;&
q\gets  \Fa;& \tild q\gets \Tr
\end{matrix*}
}$$
In more detail, $p\gets \Tr,r$ is obtained from the justification made up of $\branch_1$ and $\branch_2$, and the fact that $\bekk(\branch_1)=\Tr$ and $\bekk(\branch_2)=r$.

\end{example}

This is called flattening because every locally complete justification is reduced to a single rule. If $\be$ is parametric, justifications in this new system have  only branches of length two.  If these simple branches are mapped to their last element, then the flattening is equivalent to the original system.
For parametric systems, flattening thus preserves the meaning of the original system:

\begin{proposition}\label{prop:supportinvariantunderflat}
   If $\js_1$ and $\js_2$ are 
   parametric justification systems mapping a branch $x \rightarrow y$ to $y$, then:
\begin{enumerate}
\item $\js_1$ is equivalent to $\Flat(\js_1)$.
\item $\js_1$ and $\js_2$ are equivalent if and only if $\Flat(\js_1)$ and $\Flat(\js_2)$ are  equivalent.
\end{enumerate}
\end{proposition}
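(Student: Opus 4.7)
The plan is to prove part~(1) directly by establishing a value-preserving correspondence between locally complete justifications in $\js_1$ and those in $\Flat(\js_1)$; part~(2) will then follow from part~(1) by transitivity of equivalence.

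First I would note two structural observations that exploit the parametric hypothesis. Because $\be$ is parametric, for every justification $J \in \justifications(x)$ in $\js_1$ the set $A_J := \{\be(\branch) \mid \branch \in \branches_J(x)\}$ consists entirely of \emph{open} facts. Consequently every body in $\rules^f$ lies in $\Fo$, so in $\Flat(\js_1)$ the only locally complete justifications are the ``depth-one'' trees: a root $x \in \Fd$ together with leaves forming the body of some chosen rule $x \gets A \in \rules^f$. In particular every $J'$-branch in such a justification has the form $x \to y$ with $y \in \Fo$, and by hypothesis $\be(x \to y) = y$.

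For part~(1), take any interpretation $\interp$ and any $x \in \F$. If $x \in \Fo$, both supported values equal $\interp(x)$ by definition, so assume $x \in \Fd$. For the inequality $\suppvalue_{\js_1}(x,\interp) \leq \suppvalue_{\Flat(\js_1)}(x,\interp)$, I would pick an arbitrary locally complete $J \in \justifications(x)$ in $\js_1$, associate with it the rule $x \gets A_J \in \rules^f$, and form the depth-one justification $J'$ in $\Flat(\js_1)$ using this rule. Then
\[
\jval(J',x,\interp)
= \min_{y \in A_J} \interp(\be(x \to y))
= \min_{y \in A_J} \interp(y)
= \min_{\branch \in \branches_J(x)} \interp(\be(\branch))
= \jval(J,x,\interp).
\]
Conversely, for $\suppvalue_{\Flat(\js_1)}(x,\interp) \leq \suppvalue_{\js_1}(x,\interp)$, every locally complete $J'$ in $\Flat(\js_1)$ is determined by some rule $x \gets A \in \rules^f$, which by the definition of $\rules^f$ arises from at least one justification $J \in \justifications(x)$ in $\js_1$ with $A_J = A$; the same chain of equalities yields $\jval(J',x,\interp) = \jval(J,x,\interp)$. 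Taking $\max$ over justifications on both sides gives $\suppvalue_{\js_1}(x,\interp) = \suppvalue_{\Flat(\js_1)}(x,\interp)$, which is exactly the definition of equivalence.

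Part~(2) is then immediate: if $\js_1 \equiv \js_2$ then by part~(1) we chain $\Flat(\js_1) \equiv \js_1 \equiv \js_2 \equiv \Flat(\js_2)$, and symmetrically if $\Flat(\js_1) \equiv \Flat(\js_2)$ then $\js_1 \equiv \Flat(\js_1) \equiv \Flat(\js_2) \equiv \js_2$. The main subtlety I expect is in the ``converse'' direction of part~(1): one must check that every rule in $\rules^f$ really does come from some justification of $\js_1$ (so that the depth-one $J'$ can be lifted back to a witnessing $J$), and that restricting to locally complete justifications does not change the supremum in the definition of $\suppvalue$ — which is why the parametric hypothesis, forcing all flattened bodies into $\Fo$, is essential.
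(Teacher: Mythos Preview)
Your proposal is correct and follows essentially the same approach as the paper: both directions of part~(1) are established by the natural correspondence between locally complete justifications $J$ in $\js_1$ and depth-one justifications given by the rule $x \gets \{\be(\branch)\mid \branch\in\branches_J(x)\}$ in $\Flat(\js_1)$, using that parametricity forces these bodies into $\Fo$. The paper's proof is slightly terser (it picks a value-optimal $J$ rather than an arbitrary one and proves only an inequality per direction) and leaves part~(2) implicit, but your derivation of part~(2) by transitivity is exactly the intended argument.
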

Notice that it immediately follows that if the preconditions of the above proposition are satisfied, then the $\js_1$ and $\Flat(\js_1)$-models coincide.
As a consequence, flattening preserves the properties of the justification at hand.
We should note, however, that the structure of the justification is lost: justifications in $\Flat(\js)$ are condensed down.
We will come back to this later.

Flattening provides us with a way to evaluate inner definitions. The second operation needed for compression is \emph{unfolding}, which allows to eliminate inner symbols from the outside definitions, by replacing facts at a lower level by a (flattened) case for that fact.

\begin{definition}
  Let $\rules$ be a set of rules, and $\rules_\ell$ a set of rules for the facts $X$ (the elements of $X$ are the heads of the rules in $\rules_\ell$).
  Take a rule $x \gets A$ in $\rules$.
  Let $f$ be any function with domain $A \cap X$ such that for all $y \in A \cap X$, $y \gets f(y)$ is a rule in $\rules_\ell$ (the function $f$ chooses a rule for each $y \in A \cap X$).
  The \emph{unfolding} of $x \gets A$ with respect to $f$ is the rule
  \begin{equation*}
  \Unf{f}{x \gets A} =
  x \gets (A \setminus X) \cup \bigcup_{y \in A \cap X} f(y).
  \end{equation*}
  Let $F_{x \gets A}$ be the set of such functions $f$.
  Then the \emph{unfolding} of $x \gets A$ with respect to $\rules_\ell$ is the set
  \begin{equation*}
  \Unf{\rules_\ell}{x \gets A}=\setprop{\Unf{f}{x \gets A}}{f \in F_{x \gets A}}.
  \end{equation*}
  The \emph{unfolding} of $\rules$ with respect to $\rules_\ell$ is
  \begin{equation*}
  \unfold{X}{\rules_\ell}{\rules} =
  \bigcup_{x \gets A \in \rules} \Unf{\rules_\ell}{x \gets A}.
  \end{equation*}
\end{definition}

In a nutshell, $  \unfold{X}{\rules_\ell}{\rules}$ is obtained by replacing, in each $x\gets y_1,\ldots,y_n\in\rules$, each fact $y_i$ defined in $\rules_\ell$ by the body facts $A$ of a rule $y_i\gets A$ defining $y_i$.

\begin{example}[Examples \ref{running:example:1} \& \ref{running:example:2} ctd.]\label{running:example:3}
With all systems as defined previously, we see that
\[\unfold{\Fdl}{\Flat(\rules^1)}{\rules}=
\jframe{
r\gets \Tr,r,\Fa;\quad
\tild r\gets \tild r;\quad \tild r\gets \Fa;\quad \tild r\gets \Tr
} \cup \rules^1.\]

\end{example}

On the basis of the unfolding of a set of rules we can define the unfolding of a justification system, obtained by  unfolding the rules of the parent system with respect to the rules of the children systems, and keeping the rules for the lower-level facts ($\rules^s$):
\begin{definition}
  Let $\js=\nestedjs$ be a two-level nested compressible justification system.
  The \emph{unfolding} of $\js$ is
  \begin{equation*}
  \Unfold{\js} = \langle \F, \Fd, \rules^s \cup \unfold{\Fd \setminus \Fdl}{\rules^s}{\rules}, \be \rangle,
  \end{equation*}
  where $\rules^s = \cup_{i=1}^k \rules^i$.
\end{definition}
%
The unfold-operation reduces the depth of the nesting tree and thus serves as a basis for the compression of a nested justification system (notice that since the subsystems $\js^i$ will have a smaller depth than $\js$, $\Compress(\js)$ is well-defined).

\begin{definition}[\citeauthor{lpnmr/DeneckerBS15} \citeyear{lpnmr/DeneckerBS15}]
  Let $\js=\nestedjs$ be a compressible nested justification system.
  The compression $\Compress(\js)$ is defined inductively to be the justification system
  \begin{equation*}
  \Unfold{\langle \F, \Fd, \Fdl, \rules, \be, \setl{\Flat(\Compress(\js^1)), \ldots, \Flat(\Compress(\js^k))}\rangle}.
  \end{equation*}
\end{definition}

\begin{remark}\label{remark:compress:and:parametric}
Intuitively, the compression works as follows, described here for a two-level system $\js$ with only one subsystem $\js^1=\langle \F^1,\Fd^1,\rules^1,\be^1  \rangle$.
First of all, $\js^1$ is flattened.
We know from Proposition \ref{prop:supportinvariantunderflat} that this preserves the meaning of the system, and in fact, it no longer matters which branch evaluation is used, provided that it maps branches of length 2 onto their last element. In other words, the original system and branch evaluation are consolidated into the flattened frame.
The next step is to replace each occurrence (in $\js$) of each fact defined in $\js^1$ by all possible cases for it in the flattened system.
If $\be^1$
is parametric, each such case in the flattened system contains only fact that are either locally defined or open in $\js$. The result can then be evaluated  without any knowledge of the branch evaluation or rules of $\js^1$.
While technically, the compressed system also contains a copy of $\rules^1$, they are only there for recovering the value of facts in $\Fd^1$; they play no role for determining the value of facts in $\js$.
In case $\be^1$ is not parametric, this construction no longer works: the cases in the flattening can then contain facts that are locally defined in $\jf^1$.
For this reason, compression was only defined for parametric branch evaluations by \citet{lpnmr/DeneckerBS15}.
We here slightly relax this condition, by allowing (in a compressible system) the outer branch evaluation to be non-parametric.

\end{remark}

\begin{example}[Example \ref{running:example:3} ctd.]
We have already calculated $\unfold{\Fdlindex{1}}{\rules^2}{\rules^1}$ in Example \ref{running:example:3}.
Since $\unfold{\Fdl}{\rules^1}{\rules}$ contains only rule bodies with open facts, we see that $\Flat(\unfold{\Fdl}{\rules^1}{\rules})=\unfold{\Fdl}{\rules^1}{\rules}$.
We therefore obtain:
$$\Compress(\js)=\langle \F,\Fd,\unfold{\Fdl}{\rules^1}{\rules},\bekk\rangle$$
and thus end up with an unnested justification system. It can be verified that the unique $\Compress(\js)$-model is $\interp$ with $\interp(\tild r)=\interp(p)=\interp(q)=\Fa$.
\end{example}

\begin{example}
We return to the nested justification system of Example \ref{ex:aggregates:1}. It can be verified that  $\Compress(\js_{\mathit{FLP}})$ is equal to (the complementation of) 
\[
\best:\jframe{
\begin{matrix*}[l]
\begin{matrix*}[l]
p\gets\ltrue;& q\gets\ltrue;& s\gets p,s& s\gets p,q,s;& s\gets p,q; 
\end{matrix*}
\\
\begin{matrix*}[l]
{\tt atLeastTwo} \gets p,q; & {\tt atLeastTwo} \gets s,q;  & {\tt atLeastTwo} \gets p,s; \end{matrix*}

\end{matrix*}
}
\]
The interpretation $\interp$ with ${\cal I}(s)={\cal I}(p)={\cal I}(q)=\Tr$ is the only two-valued  $\Compress(\js_{\mathit{FLP}})$. This interpretation indeed corresponds to the only stable model under the FLP semantics (mentioned above). 

On the other hand, there are no two-valued $\Compress(\js_{\mathit{GZ}})$-models. This can be seen by observing that 
$\Compress(\js_{\mathit{GZ}})$ equals (the complementation of):
\[
\best:\jframe{
\begin{matrix*}[l]
p\gets\ltrue;& q\gets\ltrue;& s\gets p,q,\tild s;& s\gets p,s,\tild q;& s\gets p, q,s,\tild p; & s\gets p,q,s;\\
\end{matrix*}\\
\begin{matrix*}[l]
{\tt atLeastTwo}\gets p,q,\tild s;&
{\tt atLeastTwo}\gets s,q,\tild p;\\
{\tt atLeastTwo}\gets p,s,\tild q;&
{\tt atLeastTwo}\gets p,q,s
\end{matrix*}
}
\]
Suppose $\interp$ were a $\Compress(\js_{\mathit{GZ}})$-model, then clearly $\interp(p)=\interp(q)=\ltrue$.
If $\interp(s)=\ltrue$, then $\suppvalue(s,\interp)=\lfalse$ (because each justification of $s$ has a branch $s\to \tild x \to \dots$ (with $x$ either $p$, $q$, or $s$). 
On the other hand, if $\interp(s)=\lfalse$, then  $\suppvalue(s,\interp)=\ltrue$, and it can be verified that the justification that selects the rule $s\gets p,q,\lnot s$ supports $s$ in this case. Hence, such an $\interp$ can indeed not exist. 
%
\end{example}


In this section, the semantics of nested justifications in terms of compressions, originally proposed by \citet{lpnmr/DeneckerBS15}  was described and studied. This characterisation of semantics of nested justification systems is the first one given in the literature. However, it does have a significant downside: the explanatory potential of justification systems is partially lost, since rules in lower levels of the justification system are compressed to their evaluation. For example, the definition of $r$ in Example  \ref{running:example:1} is compressed to $\{r\gets \Tr,r,\Fa\}$, and thus the explanatory potential of justification theory is completely lost.
Indeed,
intuitively, we have a justification in the original nested justification system that looks like the justification on the left of the figure below, and is obtained by constructing a justification on the basis of all (relevant) rules in $\js$:

%
%

\begin{center}
    \begin{tikzpicture}[transform shape]
            \node (rt) at (-0.25,2) {$r$};
        \node (q) at (-1,1) {$q$};

        \node (p) at (1, 1) {$p$};
        \node (tq1) at (0.5, 0) {$\tild q$};

    \tikzstyle{EdgeStyle}=[style={->}]
	    \Edge(rt)(q)
	    \Edge(rt)(p)
	    \Edge(p)(tq1)

         \node (r2) at (7.6,1) {$r$};
        \node (Tr) at (7.1, 0) {$\Tr$};
        \node (Fa) at (8.1, 0) {$\Fa$};
%
%

    \tikzstyle{EdgeStyle}=[style={->}]
   
	   	   \Loop[dist=1.0cm,dir=SO](q)(q)
	   	   \Loop[dist=1.0cm,dir=SO](tq1)(tq1)

	      \Edge(r2)(Tr)
	    \Edge(r2)(Fa)
	        \Loop[dist=1.0cm,dir=NO](r2)(r2)
	       \tikzstyle{EdgeStyle}=[style={->, bend right}]
	    \Edge(p)(rt)
    \end{tikzpicture}
  \end{center}
However, under the compression-based semantics, we obtain the justification on the right, where most information on the justification of $r$ has been lost. Thus, from the point of view of explainable reasoning, it is preferable to be able to somehow evaluate the justification on the left while retaining semantic equivalence with the compression-based semantics. This is exactly what will be done in the next section.


\section{Merging-based Characterisation of Semantics for Nested Systems}\label{sec:merge}

To avoid the loss of information with explanatory potential suffered by the compression-based characterisation, the merging-based characterisation of semantics is now proposed. The basic idea of the merging-based characterisation is to
consolidate all component systems of
the nested justification system in a justification system $\langle \F, \Fd, \rules^*, \be^* \rangle$, where all rules of all component justification systems are simply gathered in $\rules^*$. This also means that justification branches are now constructed on the basis of \textit{all} rules occurring somewhere in the nested justification system, and not simply on the basis of one component justification system.
The information about the nesting structure of the original system is not discarded, but taken into account in the \emph{merge branch evaluation} $\be^*$. In more detail, for infinite branches $\branch$ in $\rules^*$, one looks for the highest component justification system $\js'$ in the nested justification system s.t.\ $\branch$ contains infinitely many elements of locally defined facts of $\js'$, and then applies the original branch evaluation $\be'$ of the component system in question to the $\js'$-branch $\branch'$ that corresponds to $\branch$. 
This formalizes the intuition that priority is given to the outermost semantics.

\begin{definition}\label{def:merge}
  The \emph{merge} $\Merge(\js)$ of $\js$ is the justification system $\langle \F, \Fd, \rules^*, \be^* \rangle$, where:
\begin{itemize}
\item    $\rules^*$ is the union of all the rules in $\js$.
\item  the \emph{merge branch evaluation} $\be^*$
w.r.t.\ $\js$
is defined as follows:
 \begin{enumerate}
    \item if $\branch$ is finite, then $\be^*(\branch)$ is equal to the last element of $\branch$.
    \item if $\branch$ is infinite, then let $\js'$ be the unique\footnote{Existence and uniqueness of this system are argued below the definition.} system equal to either $\js$ or one of its  descendant systems such that:

\begin{enumerate}
 \item $\branch$ has infinitely many occurrences of facts defined locally in $\js'$.\footnote{In more formal detail, there is some $a\in \Fd'$ s.t.\ $\branch$ contains infinitely many occurrences of $a$.}\label{propOne}
     \item No ancestor system of $\js'$ has property (a), i.e. for all ancestor systems of $\js$, $\branch$ has only finitely many occurrences of facts  defined locally in that system.
\end{enumerate}
Let $\branch'$ be the branch obtained from $\branch$ by removing all facts not defined locally in $\js'$.
Then $\be^*(\branch) = \be'(\branch')$, where $\be'$ is the branch evaluation of $\js'$.

  \end{enumerate}
\end{itemize}
\end{definition}

The conditions in Definition \ref{def:nested} guarantee that branches in
$\Merge(\js)$ are sequences of facts $x_0\rightarrow x_1\rightarrow \ldots$ s.t.\ $x_i$ is defined locally in a descendant or ancestor of 
$x_{i+1}$.
This guarantees that the system $\js'$ indeed exists and is unique: if a branch has infinitely many occurrences of facts defined (locally) in two systems, then it must also go infinitely often through a common ancestor.



\begin{example}[Example \ref{running:example:1} ctd.]
Consider again $\js$ as in Example \ref{running:example:1}.
We have, among others, the justification for $r$ in $\Merge(\js)$ on the left side of the following picture\footnote{The justification is visualized here as a graph; the actual justification is the tree-unravelling of this graph which will for instance have infinitely many nodes labeled $r$. For instance, the edge from $p$ to $r$ symbolises the fact that every node labelled $p$ has a child node labelled $r$.}; on the right side, the individual branches have been unravelled and the corresponding ``primed'' branches are drawn:
 \begin{center}
    \begin{tikzpicture}[transform shape]

        \node (rt) at (-0.25,3) {$r$};
        \node (q) at (-1,2) {$q$};

        \node (p) at (1, 2) {$p$};
        \node (tq1) at (0.5, 1) {$\tild q$};
 \node (l1u) at (4,2) {$\branch_2$:};
 \node (r1u) at (5,2) {$r$};
 \node (q1u) at (6,2) {$q$};
 \node (q1u2) at (7,2) {$q$};
 \node (q1u3) at (8,2) {$\ldots$};

	 \node (l1ua) at (4,1) {$\branch_3$:};
 \node (r1ua) at (5,1) {$r$};
 \node (q1ua) at (6,1) {$p$};
 \node (q1u2a) at (7,1) {$\tild q$};
  \node (q1u3a) at (8,1) {$\tild q$};
 \node (q1u4a) at (9,1) {$\ldots$};

	 \node (l1ub) at (4,3) {$\branch_1$:};
 \node (r1ub) at (5,3) {$r$};
 \node (q1ub) at (6,3) {$p$};
 \node (q1u2b) at (7,3) {$r$};
  \node (q1u3b) at (8,3) {$p$};
 \node (q1u4b) at (9,3) {$\ldots$};

 \node (l1') at (4.2,-0.2) {$\branch'_1$:};
  \node (l2') at (6.2,-0.2) {$\branch'_2$:};
 \node (l3') at (8.2,-0.2) {$\branch'_3$:};

        \node (rtn) at (5,0) {$r$};
        \node (qn) at (7,0) {$q$};
        \node (tqn) at (9,0) {$\tild q$};


%
%
%

    \tikzstyle{EdgeStyle}=[style={->}]
	    \Edge(rt)(q)
	    \Edge(rt)(p)
	    \Edge(p)(tq1)
	    \Edge(r1u)(q1u)
	    	\Edge(q1u)(q1u2)
	    \Edge(q1u2)(q1u3)
	    \Edge(r1ua)(q1ua)
	    	\Edge(q1ua)(q1u2a)
	    \Edge(q1u2a)(q1u3a)
	    	    \Edge(q1u3a)(q1u4a)
	    	    	    \Edge(r1ub)(q1ub)
	    	\Edge(q1ub)(q1u2b)
	    \Edge(q1u2b)(q1u3b)
	    	    \Edge(q1u3b)(q1u4b)

	   \Loop[dist=1.0cm,dir=SO](rtn)(rtn)
	   	   \Loop[dist=1.0cm,dir=SO](q)(q)
	   	   \Loop[dist=1.0cm,dir=SO](tq1)(tq1)

 \Loop[dist=1.0cm,dir=SO](qn)(qn)
	   	   \Loop[dist=1.0cm,dir=SO](tqn)(tqn)
	       \tikzstyle{EdgeStyle}=[style={->, bend right}]
	    \Edge(p)(rt)

    \end{tikzpicture}

  \end{center}
 Notice that all literals in $\branch_1$
  occur infinitely many times in $\branch_1$. Since $r$ is defined locally in $\js$, i.e.\ it occurs in ${\cal F}_{dl}$,
 we have to look at $\js$ to evaluate $\branch_1$ with $\be^\star$. On the right side, the branch $\branch_1'=r\rightarrow r \rightarrow \ldots$ obtained from $\branch_1$ by removing all facts defined outside of $\js$ is given. We now obtain $\be^\star(\branch_1)$ by evaluating $\branch_1'$ according to $\branch_1$, i.e., $\be^\star(\branch_1)=\bekk(\branch_1')=\Un$.

 For  $\branch_2$
and $\branch_3$,
 we see that the highest justification system in which $q$ respectively $\tild q$ are defined is $\js^1$. We therefore obtain the branches $\branch_2'=q\rightarrow q\rightarrow \ldots$ and $\branch_3'=\tild q\rightarrow \tild q\rightarrow \ldots$. Since $\js^1$ uses the well-founded branch evaluation $\bewf$, we obtain $\be^\star(\branch_2)=\bewf(\branch_2')=\Fa$ and $\be^\star(\branch_3)=\bewf(\branch_3')=\Tr$. Here we can thus see the different branch evaluations of the component systems at work: for example, branches with infinite occurrences of positive literals are treated differently in view of the component justification system in which these literals are defined, and the branch evaluations used in these justification systems. In more detail, $\branch_1$ is evaluated to $\Un$ since it is
defined locally
 in $\js$, which uses the Kripke-Kleene evaluation, whereas $\branch_3$ is evaluated to $\Fa$ since it is defined in $\js^2$, which uses the well-founded evaluation.
 Altogether, we see that for $\interp(r)=\Tr$, $\interp(q)=\interp(p)=\Fa$,
 it holds that $\suppvalue(r, \interp)=\Tr$. In fact, it can be checked that $\interp$ is the unique $\Compress(\js)$-model, which is no coincidence as we will see next.
\end{example}

\section{Equivalence of Compression- and Merging-Based Characterisations}\label{sec:se}
Even though the compression-based and merging-based characterisation of semantics are based on quite different mechanisms, they give rise to the same evaluations for nested justification systems, i.e., they are equivalent. Showing this is rather involved, and due to spatial limitations, the details are given in appendix \ref{sec:appendix:th:treelikemergecompressstronglyequivalent}, and illustrated schematically in Figure \ref{fig:shrinkandexpand}. The crucial idea for the proof of equivalence is the definition of two operations, shrinking and expanding, that allow converting justifications in $\Compress(\js)$ to justifications in $\Merge(\js)$ and vice versa. On the basis of these operations, it is then shown that the supported value for defined facts are the same under $\Merge(\js)$ and $\Compress(\js)$ (for all defined facts $x$ and interpretations $\interp$):
  $$\suppvalue_{\!\!\Compress(\js)}^t(x,\interp) = \suppvalue_{\!\!\Merge(\js)}^t(x,\interp)$$
To guarantee that shrink and expand behave as expected, two minor assumptions are necessary, namely (1) that finite branches are mapped to their last element (\citet[Propositions 1 and 2]{ijcai/MarynissenBD21} have shown how to modify branch evaluations to satisfy this condition) and (2) that all parametric branch evaluations map infinite branches to logical facts (in fact, unless additional structure on the fact space is assumed, this will always be satisfied).
We obtain the following theorem.

\begin{theorem}\label{th:treelikemergecompressstronglyequivalent}
Let a compressible nested justification system $\js$ in which every branch evaluation maps finite branches to their last element, and infinite branches to logical facts. Then
$\Compress(\js)$ and $\Merge(\js)$ are equivalent.
\end{theorem}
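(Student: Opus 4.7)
The plan is to prove equivalence by induction on the depth of the nesting tree of $\js$. The base case where $\js$ is unnested is trivial because then $\Compress(\js)=\Merge(\js)=\js$. For the inductive step I fix a $\js$ of positive depth and construct two value-preserving operations, $\shrink$ and $\expand$, between locally complete justifications in $\Merge(\js)$ and those in $\Compress(\js)$. Since $\suppvalue$ is defined as a maximum of $\jval$ over all justifications, value-preservation of $\shrink$ and $\expand$ immediately gives $\suppvalue_{\Compress(\js)}(x,\interp)=\suppvalue_{\Merge(\js)}(x,\interp)$ for every defined fact $x$ and every interpretation $\interp$, hence equivalence.

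To define $\shrink$, I take a locally complete justification $J$ in $\Merge(\js)$ and identify its maximal subtrees whose root is labelled by a fact in some $\Fd^i$ and whose parent (if any) is labelled outside $\F^i$. Each such subtree is a locally complete justification of its root in the merged view of $\js^i$; by the inductive hypothesis together with Proposition~\ref{prop:supportinvariantunderflat}, it corresponds to a single flattened rule $y\gets B$ of $\Flat(\Compress(\js^i))$, where the hypothesis that parametric branch evaluations send infinite branches to $\lf$ guarantees $B\subseteq \Fo\cup \Fdl$. Collapsing each such maximal subtree to a single edge labelled with $B$ yields a justification $\shrink(J)$ in $\Compress(\js)$, because the combined rule used at the parent belongs to $\unfold{\Fd\setminus\Fdl}{\rules^s}{\rules}$. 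The operation $\expand$ inverts this: each use of an unfolded outer rule in a $\Compress(\js)$-justification records a choice of flattened cases in children, and by the inductive hypothesis each such case is realised by some locally complete justification in $\Merge(\js^i)$, which I splice in to recover a full merge-level justification.

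The central verification is value preservation, done branch by branch. Finite branches are handled by the hypothesis that every branch evaluation returns the last element: both $\shrink$ and $\expand$ preserve the identity of the leaf, hence its $\interp$-value. For an infinite branch $\branch$ in $\Merge(\js)$, let $\js'$ be the system selected in Definition~\ref{def:merge}. If $\js'=\js$, the sequence of $\Fdl$-labels of $\branch$ is preserved by $\shrink$ and $\be^*(\branch)=\be(\branch)$ coincides with what $\Compress(\js)$ computes on the image branch. If $\js'$ is a strict descendant, the parametric hypothesis forces $\be^*(\branch)=\be'(\branch')\in\lf$ for the restricted branch $\branch'$, and the corresponding branch in $\shrink(J)$ either becomes finite ending at that logical fact or continues purely through outer structure, contributing the same value under $\interp$. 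The symmetric argument verifies $\expand$.

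The main obstacle is exactly this bookkeeping at the boundary between nesting levels, and this is why both hypotheses of the theorem are needed. The finite-branch assumption ensures that collapsing a subtree into a single edge does not change the value seen at the boundary; the assumption that parametric branch evaluations send infinite branches to $\lf$ ensures that the flattened child rules have bodies in $\Fo\cup \Fdl$ so that the unfolded outer rules are well-formed and branches in $\Compress(\js)$ correspond precisely to outer-level fragments of branches in $\Merge(\js)$. Once these bookkeeping issues are settled, taking the maximum of $\jval$ over all locally complete justifications on both sides yields the desired equality of supported values and thus the equivalence of $\Compress(\js)$ and $\Merge(\js)$.
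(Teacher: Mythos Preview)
Your approach is essentially the paper's: both construct value-preserving $\shrink$ and $\expand$ maps between justifications of $\Merge(\js)$ and $\Compress(\js)$ and conclude equality of supported values from the two resulting inequalities. The organisational difference is that the paper proceeds \emph{bottom-up and iteratively}---it defines $\shrink_{\js^*}$ only at a node $\js^*$ whose children are already unnested leaves, proves value preservation for that single step (their Proposition~\ref{prop:shrinkingretainsvalue}), and then iterates until the whole tree is collapsed---whereas you proceed \emph{top-down by induction on depth}, collapsing an entire child subtree $\js^i$ in one go.

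There is one genuine gap in your inductive version. You write that a maximal subtree rooted at $y\in\Fd^i$ ``by the inductive hypothesis together with Proposition~\ref{prop:supportinvariantunderflat}, corresponds to a single flattened rule $y\gets B$ of $\Flat(\Compress(\js^i))$''. But your inductive hypothesis, as stated, is only \emph{equivalence} of $\Merge(\js^i)$ and $\Compress(\js^i)$, i.e.\ equality of supported values. That does not yield a rule-level correspondence: the body $B$ you actually obtain is $\{\be^*_i(\branch):\branch\in\branches_{J_y}(y)\}$, a rule of $\Flat(\Merge(\js^i))$, and two equivalent parametric systems need not have the same flattening as \emph{rule sets} (e.g.\ $\{x\gets a\}$ versus $\{x\gets a;\ x\gets a,b\}$). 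Hence your $\shrink(J)$ is not obviously a justification in $\Compress(\js)$, whose outer rules are unfolded using $\Flat(\Compress(\js^i))$. The fix is straightforward: either strengthen the induction hypothesis to ``there exist value-preserving $\shrink^i$, $\expand^i$ between $\Merge(\js^i)$ and $\Compress(\js^i)$'' and then first apply $\shrink^i$ to $J_y$ before flattening; or weaken the claim from ``value-preserving'' to ``value-non-decreasing'' and use equivalence only to find \emph{some} rule of $\Flat(\Compress(\js^i))$ with at least the same value. The paper's bottom-up iteration sidesteps the issue entirely because its local shrink is always applied where the children are genuinely unnested, so the subjustification lives directly in $\js^i$ and its flattening is literally a rule of $\Flat(\js^i)$.
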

It immediately follows that for a justification system that satisfies all of the preconditions of Theorem \ref{th:treelikemergecompressstronglyequivalent}, the $\Compress(\js)$-models coincide with the $\Merge(\js)$-models.

\begin{figure}
\begin{center}
  \begin{adjustbox}{max width=\textwidth}
    \begin{tikzpicture}[transform shape]

        \node (rt) at (-0.25,3) {$r$};
        \node (q) at (-1,2) {$q$};
        \node (p) at (1, 2) {$p$};
        \node (tq1) at (0.5, 1) {$\tild q$};
%


%
%
%

    \tikzstyle{EdgeStyle}=[style={->}]
	    \Edge(rt)(q)
	    \Edge(rt)(p)
	    \Edge(p)(tq1)

	   	   \Loop[dist=1.0cm,dir=SO](q)(q)
	   	   \Loop[dist=1.0cm,dir=SO](tq1)(tq1)


     \node (rtn) at (10,2) {$r$};
     \node (f) at (9.25,1) {$\Fa$};

        \node (t) at (10.75, 1) {$\Tr$};
	    \Edge(rtn)(f)
	    \Edge(rtn)(t)
	   \Loop[dist=1.0cm](rtn)(rtn)

  \node (pm) at (3.5,4) {$p$};
   \node (rm) at (4.25,3) {$r$};
   \node (tqm) at (2.75,3) {$\tild q$};
  \node (qm) at (5.5,3.5) {$q$};
  \node (under1) at (3.5,2.9) {};
  \node (under2) at (5.5,2.9) {};

\node (rule1) at (0,4) {$r\gets p,q$};
\node (rule2) at (10.2,4) {$r\gets r,\Tr,\Fa$};

%

        \node (ph) at (1.2,1.5) {};
        \node (phlower) at (1.25,1.4) {};
        \node (compress) at (5,4.7) {};
        \node (expand) at (4.8,1.7) {};
        \node (expand2) at (4.8,1.7) {};
        \node (expand1) at (4.3,1.7) {};
        \node (ph2) at (8.2,1.5) {};

        \node (ph3) at (8.2,1) {};
        \node (ph4) at (1.2,1) {};

		\node (ph3') at (4.41,1.5) {};
		\node (ph4') at (4.41,2.1) {};
		
		\node (ph5') at (4.41,3.7) {};
		\node (ph6') at (4.41,4.5) {};

		\node (ph5) at (9.5,4) {};
        \node (ph6) at (0.5,4) {};

    
%


\Edge(pm)(tqm)
	    \Edge(pm)(rm)
    	   	   \Loop[dist=1.0cm](tqm)(tqm)
	    	   	   \Loop[dist=1.0cm,dir=SO](qm)(qm)
	      \tikzstyle{EdgeStyle}=[style={->, bend right}]
	    \Edge(p)(rt)

	    	      \tikzstyle{EdgeStyle}=[style={->, double, thick, color=gray}]
	    \Edge[label=Expand](ph2)(ph)
   	    \Edge[label=Shrink](ph4)(ph3)

   	    \tikzstyle{EdgeStyle}=[style={,->, thick, color=gray, bend left=-30}]
   	    \Edge(ph6)(expand1)
   	    \tikzstyle{EdgeStyle}=[style={,-> , thick, color=gray, bend left=5}]
   	    \Edge(under1)(expand2)
   	    \tikzstyle{EdgeStyle}=[style={,-> , thick, color=gray, bend left=10}]
   	    \Edge(under2)(expand)

\tikzstyle{EdgeStyle}=[style={->, , thick, color=gray, bend left=20}]
   	    \Edge[label=Compress](ph6)(ph5)
   	    
\tikzstyle{EdgeStyle}=[style={,-> , thick, color=gray, bend left=30}]
   	    \Edge(pm)(compress)
   	    
   	    \tikzstyle{EdgeStyle}=[style={,-> , thick, color=gray, bend left=40}]
   	    \Edge(qm)(compress)

%

    \end{tikzpicture}
  \end{adjustbox}  \end{center}
\caption{
Schematic illustration (based on Example \ref{running:example:1}) of the \emph{shrinking}-operation, which converts a justification in $\Merge(\js)$ into a justification in  $\Compress(\js)$  (also illustrated schematically), and the \emph{expanding}-operation, which converts a justification in $\Compress(\js)$ (taking into account the relevant justifications in $\js$) into a justification in
$\Merge(\js)$.}
\label{fig:shrinkandexpand}
\end{figure}
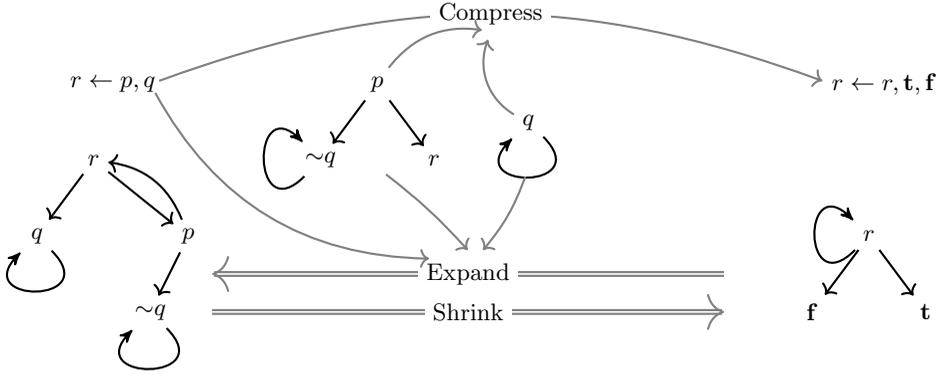

\begin{remark}\label{remark:compressible:compression}
As mentioned in Remark \ref{remark:compress:and:parametric}, compression was only defined for so-called compressible branch evaluations to allow evaluating the different involved systems separately. For merging, this restriction does not play a role: here all branch evaluations are taken into account simultaneously, in the definition of $\be^*$. As such, we have not just given a novel characterization of the semantics of nested systems, but also significantly expanded its scope, by allowing non-parametric systems, such as stable or supported, to be nested.
\end{remark}

\section{\FOFD: Application to Fixpoint Definitions}\label{sec:fo}

$\FOFD$ \cite{tplp/HouDD10}  is a logic that integrates fixpoint definitions based on rules with first-order logic. Essentially, a \emph{least fixpoint definition} (respectively \emph{greatest fixpoint definition}) is defined inductively as an expression of the form  $\mathcal{D}=
\left\lfloor \mathcal{R}, \Delta_1, \ldots, \Delta_m, \nabla_1, \ldots, \nabla_n \right\rfloor$ (respectively $\left\lceil \mathcal{R}, \Delta_1, \ldots, \Delta_m, \nabla_1, \ldots, \nabla_n \right\rceil$), where $\mathcal{R}$ is a set of rules
and each $\Delta_i$ is a least fixpoint definition
and each $\nabla_j$ is a greatest fixpoint definition
, s.t.\ every symbol 
is defined in at most one of ${\cal R}$, $\Delta_1$, $\ldots$, $\Delta_m$, $\nabla_1$, $\ldots$, $\nabla_n$.
Additionally, defined symbols are only allowed to occur positively in rule bodies. The semantics of $\FOFD$, explained in full detail in Appendix \ref{appendix:fo(fd)}, is given in terms of (two-valued) interpretations, and is defined iteratively, by means of an immediate consequence operator $\Gamma^{\cal D}(I)$. Consistent with the idea of capturing a definition, a unique model according to these semantics is guaranteed to exist. The nested nature of fixpoint definitions comes into play when requirements of different levels conflict with each-other. In that case, intuitively, the highest level (the outermost definition) will be given priority.


As an example, restricted to the propositional case, consider:
  \begin{equation*}
 {\cal D}= \leastframe{
p\gets q\lor r; \quad q\gets p; \quad u\gets s\\
    \greatestframe{
     r\gets p;\quad s\gets t\lor q;\quad t\gets s
    }
  }
  \end{equation*}
  This fixpoint definition consists of two levels. The uppermost or global level is a least fixpoint definition of $p$, $q$ and $u$, whereas the lower or inner level contains a greatest fixpoint definition of the atoms $r$, $s$ and $t$. Intuitively, the semantics will ensure that the model of this definition corresponds to a greatest fixpoint of the immediate consequence operator based on the inner system, and a least fixpoint of the immediate consequence operator based on the outer system, together with the respective interpretation for the inner system. For example, the interpretation that assigns $\Tr$ to $s$, $t$ and $u$, and $\Fa$ to $p$, $q$ and $r$ to $\Fa$ is a model of this fixpoint definition.
  We see that the cycle between $p\gets q\lor r$ and $r\gets p$ gives rise to a
  conflict between the two levels: the greatest fixpoint nature of the lowest level would require us to make $p$ and $r$ true, whereas the least fixpoint nature of the highest level would require us to make $p$ and $r$ false. Since priority is given to the highest level, $p$ and $r$ are false in the defined interpretation.

 Nested justifications were partially inspired by $\FOFD$, and is therefore not surprising that \FOFD can be captured using nested justifications.
  This way of assigning priority to higher levels in case of conflicting demands also shows up in the
merging-based characterisation, where the highest component system is used to evaluate infinite branches.

%
For the propositional case, the translation is rather straightforward. Given a fixpoint definition ${\cal D}$, $\js_{\cal D}$ is defined as $\langle {\cal F},{\cal F}_d, \Fdl, \rules, \be, \{\js_{\Delta_1},\ldots,\js_{\Delta_m},\js_{\nabla_1},\ldots,\js_{\nabla_n}\}\rangle$, where $\be=\bewf$ if ${\cal D}$ is a least fixpoint definition and $\be=\becwf$ if ${\cal D}$ is a greatest fixpoint definition. We can thus represent the nested justification system $\js_{{\cal D}}$ as:
$$
\bewf:
\jframe{
p\gets q\lor r; \quad q\gets p; \quad u\gets s\\
\becwf:
\jframe{
\begin{matrix*}[l]
     r\gets p;\quad s\gets t; \quad s\gets q;\quad t\gets s
\end{matrix*}
}
}
$$
This translation is  adequate, in the sense that a unique $\bewf$-model (respectively $\becwf$-model) that leaves no atoms $\Un$ is guaranteed to exist and corresponds to the least (respectively greatest) fixpoint of the fixpoint definition.
Notice that we replaced $s\gets t\lor q$ by $s\gets t$ and $s\gets q$. For more complex formulae, more work is required (see Appendix \ref{appendix:fo(fd)}).
For the full first-order case, the idea is essentially the same, but an intermediary system is included between $\js_{\cal D}$ and $\js_{\Delta_1},\ldots,\js_{\Delta_m},\js_{\nabla_1},\ldots,\js_{\nabla_n}$ to ensure that first-order formulae are treated adequately. This translation, as well as the correspondence results and the technical details on fixpoint definitions, are detailed in the appendix \ref{appendix:fo(fd)}.


\section{Conclusion, in View of Related Work}\label{sec:rel:work}
In this paper, we gave two characterisations of the semantics of nested justification systems, and we have shown that these characterisations are equivalent. Furthermore, we have shown how nested justifications can capture nested fixpoint definitions.

The potential applications of nested justification systems are extensive, as they allow for the modular, and therefore straightforward, design of rule-based languages. A prime example is the definition of semantics for aggregates. Nested justification systems allow to separate the definition of aggregates from that of a logic program, by adding
 rules defining the aggregates at the lowest level of a justification system (cf.\  Example \ref{ex:aggregates:1}).
In future work, we plan to investigate exact translations from known semantics for logic programs with aggregate expressions (e.g., \cite{tplp/PelovDB07,faber2011semantics,gelfond2019vicious,corr/AlvianoFG21,VanbesienBD21} in nested justification systems, bringing justification theory's explanatory potential to such semantics.

Some aspects of nested justification, in particular the fact that different modules using different semantics can be combined, are reminiscent of \emph{multi-context systems} \cite{brewka2018reactive}. There are, however, significant differences between nested justification systems and multi-context systems.
On the one hand, multi-context systems allow for a broader range of knowledge bases to be used as modules. For example, default rules or autoepistemic belief bases can be used in multi-context systems, but these cannot be modelled straightforwardly in justification theory. On the other hand, nested justification systems allow for more sophisticated interactions and structures. 
 It remains an open question whether, for a sub-class of multi-context systems based on rule-based contexts, a translation of multi-context systems into nested justification systems exists.
In this paper, we gave a new view on nested justification systems that retains the justification quality and showed that under very mild restrictions, the two views are equivalent for tree-like justifications.
Whether this holds for graph-like justifications is an open question.

Another question that shows up in several papers on justification theory \cite{DeneckerPhD93,nmr/MarynissenPBD18,tplp/MarynissenBD20} is the \emph{consistency} question: is it always so that $\suppvalue(\tild x,\interp) = \tild \suppvalue(x,\interp)$?
The same question is relevant for nested systems. However, in a companion paper \cite{treeLikeConsistent}, we show that in the tree-like system, all branch evaluations (and hence also our newly defined \emph{merge} evaluation) are consistent.
And because of our equivalence, we have that compression is also consistent.


\bibliographystyle{tlplike}

\newpage

\begin{appendix}
\section{Additional Preliminaries}\label{appendix:additional:preli}
A notion that will be useful is complementation and the derived concept of complementarity \cite{nmr/MarynissenPBD18} . 
It is a generic mechanism that allows turning a set of rules for $x$ into a set of rules for $\tild x$.


Before defining complementation, we first define \emph{selection functions} for $x$.
A selection function for $x$ is a mapping $s\colon \jf(x) \rightarrow \F$ such that $s(A) \in A$ for all rules of the form $x \gets A$.
Intuitively, a selection function chooses an element from the body of each rule of $x$.
For a selection function $s$, the set $\{s(A) \mid A \in \jf(x)\}$ is denoted by $\im(s)$.
Given a set of rules $\rules$, we define $\rules^*$ to be the set of rules of the form $\tild x\gets \tild \im(\sel)$ for $x\in \Fd$ that has rules in $\rules$ and $s$ a selection function for $x$. The complementation of $\jfcomplete$ is 
A justification frame $\jfcomplete$ is  \emph{complementary} if it is fixed under comlementation (i.e.\ $\rules\cup \rules^*=\rules$).

%

\section{Proofs for Section \ref{sec:just:systems}}
\noindent
{\bf Lemma \ref{lemma:stratified:facts}.}
  Let $\js=\nestedjs$ be a nested justification system.
  If $i \neq j$, then $\Fd^i \cap \F^j = \emptyset$.
\begin{proof}
 Take $i \neq j$.
 Because $\Fd$ is partitioned into $\setl{\Fdl, \Fd^1, \ldots, \Fd^k}$, it suffices to prove that $\Fd^i \cap \Fo^j = \emptyset$.
 This follows from \eqref{def:openfactsofnestedsystem} in \cref{def:nested}.
\end{proof}

To make the nesting structure more apparent, nested systems can also be viewed as a tree of unnested justification systems.
\begin{proposition}\label{prop:tree}
  A nested justification system $\js=\nestedjs$ corresponds to a tree of justification systems, called the \emph{nesting tree} of $\js$, such that
  \begin{enumerate}
    \item The root of the tree is $\langle \F, \Fdl, \rules, \be \rangle$;
    \item Each element in $\Fd$ is a defined fact in exactly one system;
    \item Each element in $\Fo$ is an open fact in every system.
    \item A fact defined in a system $\js_1$ occurs as open fact in another system $\js_2$ if and only if $\js_2$ is an ancestor or descendant of $\js_1$;
    \item The root has $k$ children and for each $1 \leq i \leq k$ the subtree rooted in the $i$th child of the root is the nesting tree of $\js^i$.
  \end{enumerate}
\end{proposition}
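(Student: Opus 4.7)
The plan is to prove the proposition by structural induction on the nesting depth of $\js$, with the nesting tree constructed explicitly at each step. For a leaf ($k=0$), we have $\Fdl = \Fd$ and the tree is a single node containing $\langle \F, \Fd, \rules, \be \rangle$; all five properties hold trivially. For the inductive step on $\js = \nestedjs$, place $\langle \F, \Fdl, \rules, \be \rangle$ at the root and attach as its $k$ child subtrees the nesting trees of $\js^1, \ldots, \js^k$ (which exist by the inductive hypothesis). Properties~1 and~5 are immediate from this construction.

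For property~2, I would combine the partition condition of Definition~\ref{def:nested}(3) (which makes $\Fdl$ disjoint from every $\Fd^i$) with the inductive hypothesis on each $\js^i$ and with Lemma~\ref{lemma:stratified:facts}, which yields $\Fd^i \cap \Fd^j \subseteq \Fd^i \cap \F^j = \emptyset$ for $i \neq j$, ensuring that the defined-fact sets in sibling subtrees do not overlap. For property~3, I would observe that $\Fo = \F \setminus \Fd \subseteq \F \setminus \Fdl$, so any global open is a local open at the root; for each child system $\js^i$, condition~5 of Definition~\ref{def:nested} (namely $\Fo^i \subseteq \Fo \cup \Fdl$) together with the fact that elements of $\Fo$ are never in $\Fdl$ shows that any $f \in \Fo$ occurring in $\F^i$ must lie in $\Fo^i$, and induction extends this through the whole subtree.

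The main obstacle is property~4. For the ``only if'' direction, I would argue by contraposition: if $\js_2$ is neither an ancestor nor a descendant of $\js_1$, let $L$ denote their least common ancestor in the nesting tree; then $\js_1$ and $\js_2$ lie in distinct child subtrees of $L$, say the $i$-th and $j$-th with $i \neq j$. A fact $f$ defined locally in $\js_1$ lies in $\Fd^i$ of $L$ (by applying property~2 inductively within the $i$-th subtree), while the fact space of $\js_2$ is contained in $\F^j$ of $L$ by the tree construction; Lemma~\ref{lemma:stratified:facts} then gives that $f$ does not occur in $\js_2$ at all, so it cannot be an open fact there. For the ``if'' direction, when $\js_2$ is an ancestor of $\js_1$, property~2 places $f$ in the global $\Fd$ of $\js_2$ but outside $\Fdl$ of $\js_2$ (since $f$ is defined strictly below), making $f$ a local open of $\js_2$; the descendant case is symmetric, invoking property~2 to prevent $f$ from being locally defined in $\js_2$, so that $f$ must be open wherever it appears in the subtree below $\js_1$.
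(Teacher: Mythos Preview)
Your proposal is correct and follows the same recursive construction that the paper's (very brief) proof gestures at: properties~1 and~5 uniquely determine the tree, and the remaining properties are then read off from Definition~\ref{def:nested}. Your argument is considerably more detailed than the paper's, which simply states that ``by combining the first and last point, the tree is uniquely determined'' and that ``the rest of the conditions follow from the definition of nested justification systems''; in particular, your explicit use of Lemma~\ref{lemma:stratified:facts} for properties~2 and~4 makes precise what the paper leaves implicit.

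One small remark on property~4: the descendant case is not truly symmetric to the ancestor case, since condition~(4) of Definition~\ref{def:nested} guarantees that facts propagate \emph{upward} into ancestor fact spaces, but nothing forces a locally defined fact of $\js_1$ to lie in the fact space of every descendant. Your hedge ``wherever it appears in the subtree below $\js_1$'' is the right formulation, and it matches how the proposition should be read; the paper itself does not address this subtlety.
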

\begin{proof}
By combining the first and last point, the tree is uniquely determined. The rest of the conditions follow from the definition of nested justification systems.
\end{proof}

\section{Proofs for Section \ref{sec:compress}}\label{appendix:section:compress}

\noindent
{\bf Proposition \ref{prop:supportinvariantunderflat}.}
   Let $\js_1$ and $\js_2$ be two parametric justification systems that map $x \rightarrow y$ to $y$. Then:
\begin{enumerate}
\item $\js_1$ is equivalent to $\Flat(\js_1)$.
\item  Then $\js_1$ and $\js_2$ are  equivalent if and only if $\Flat(\js_1)$ and $\Flat(\js_2)$ are equivalent.
\end{enumerate}   
\begin{proof}
  Take an $\F$-interpretation $\interp$ and $x \in \Fd$.
  Let $J$ be a $\js$-justification such that $\suppvalue_{\js}(x,\interp) = \jval(J,x,\interp)$.
  Then by construction of $\Flat(\js)$, this $J$ corresponds to a rule $x \gets A$ in $\Flat(\js)$ such that $A \subseteq \Fo$ and for all $y \in A$ it holds that $\branch(x \rightarrow y) \geqt \jval(J,x,\interp)$.
  The rule $x \gets A$ is a locally complete justification in $\Flat(\js)$.
  Therefore, $\suppvalue_{\Flat(\js)}(x,\interp) \geqt \jval(J,x,\interp) = \suppvalue_{\js}(x,\interp)$.
  Similarly, every locally complete justification in $\Flat(\js)$ with $x$ as root is a single rule $x \gets A$ such that $A = \setprop{\be(\branch)}{\branch \in \branches_J(x)}$ for some locally complete justification $J$ in $\js$.
  Therefore, $\suppvalue_{\js}(x,\interp) \geqt \suppvalue_{\Flat(\js)}(x,\interp)$.
\end{proof}

\section{Proof of Theorem \ref{th:treelikemergecompressstronglyequivalent}}\label{sec:appendix:th:treelikemergecompressstronglyequivalent}
In this section we show Theorem \ref{th:treelikemergecompressstronglyequivalent}. The theorem follows immediately from two results shown in this appendix, 
Corollary \ref{cor:mergelessthancompress} and Corollary \ref{cor:compresslessthanmerge}. Corollary \ref{cor:mergelessthancompress} is shown in Section \ref{sec:shrinking:justs}, and Corollary \ref{cor:compresslessthanmerge} is shown in Section \ref{sec:expand}. We also provide additional examples and explanation for the two operations of shrinking and expansion in the respective subsections.


\subsection{Shrinking Justifications}\label{sec:shrinking:justs}
To prove the equivalence between $\Compress(\js)$ and $\Merge(\js)$ we need a way to convert justifications from one system to the other.
Going from $\Merge(\js)$ to $\Compress(\js)$ should be relatively easy since justifications in $\Merge(\js)$ contain more information.
By removing this extra information, we can reduce the justification.
This is illustrated in the next example.
\begin{example}
  Take the nested system
  \begin{equation*}
  \jframe{
    z \gets x, y \\
    y \gets w \\
    \jframe{
      x \gets x \\
      w \gets a, b
    }
  },
  \end{equation*}
  where both levels are evaluated with $\bewf$.
  Let us look at the following justification for $z$ in the merge system.
  \begin{center}
    \begin{tikzpicture}[transform shape]
    \node (z) at (1.5, 3) {$z$};
    \node (x) at (2,2) {$x$};
    \node (y) at (1,2) {$y$};
    \node (w) at (1,1) {$w$};
    \node (a) at (0.5,0) {$a$};
    \node (b) at (1.5,0) {$b$};
    \tikzstyle{EdgeStyle}=[style={->}]
    \Edge(z)(x)
    \Edge(z)(y)
    \Edge(y)(w)
    \Edge(w)(a)
    \Edge(w)(b)
    \Loop[dist=1.0cm,dir=SO](x)
    \end{tikzpicture}
  \end{center}
  The loop on the right side is evaluated to $\Fa$.
  So to eliminate $x$ from the justification, we can replace this loop with the fact $\Fa$.
  On the left side, $w$ can be replaced by $a$ and $b$.
  This will produce the justification
  \begin{center}
    \begin{tikzpicture}[transform shape]
    \node (z) at (1.5, 2) {$z$};
    \node (f) at (2,1) {$\Fa$};
    \node (y) at (1,1) {$y$};
    \node (a) at (0.5,0) {$a$};
    \node (b) at (1.5,0) {$b$};
    \tikzstyle{EdgeStyle}=[style={->}]
    \Edge(z)(f)
    \Edge(z)(y)
    \Edge(y)(a)
    \Edge(y)(b)
    \end{tikzpicture}
  \end{center}
  This is a justification of the compression
  \begin{equation*}
  \jframe{
    z \gets \Fa, y \\
    y \gets a,b
  }.
  \end{equation*}
  A justification in the merge also contains justifications in the unnested systems of the nesting tree.
  In the example above, these are
  \begin{center}
    \begin{tikzpicture}[transform shape]
    \node (w) at (1,1) {$y$};
    \node (a) at (0.5,0) {$a$};
    \node (b) at (1.5,0) {$b$};
    \tikzstyle{EdgeStyle}=[style={->}]
    \Edge(w)(a)
    \Edge(w)(b)
    \end{tikzpicture}
    \qquad
    \begin{tikzpicture}[transform shape]
    \node (x) at (0,0) {$x$};
    \tikzstyle{EdgeStyle}=[style={->}]
    \Loop[dist=1.0cm,dir=SO](x)
    \end{tikzpicture}
  \end{center}
  These justifications can be reduced to a single set, by evaluating their branches, in the same manner as in the flattening.
  Then replacing facts of lower systems by such a set, we get a justification in compression as above.
\end{example}

The idea in this example is formalised in the \emph{shrinking} of a justification.
In the example above, we evaluated certain subjustifications.
\begin{definition}
  Let $\js=\nestedjs$ and $J$ a locally complete justification in $\Merge(\js)$.
  Let $\js^*$ be a node of the nesting tree of $\js$ that has children, but no grandchildren.
  Let $\js^1, \ldots, \js^m$ be the children of $\js^*$ and let $X$ be the set of internal nodes of $J$ that are defined in some $\js^i$.
  For each $x \in X$ defined in $\js^i$, let $P(x)$ be the set of maximal $J$-paths starting with $x$ that consist of
  \begin{itemize}
    \item an infinite number of defined facts of $\js^i$, or
    \item a finite number of defined facts of $\js^i$ and an open fact in $\js^i$.
  \end{itemize}
  This means that $P(x)$ forms a justification $J_x$ inside $\js^i$.
  The justification $J_x$ is called the \emph{subjustification} for $x$ in $J$.
\end{definition}

In correspondence with the flattening in the compression, each of the subjustifications can be reduced to a single rule.

\begin{definition}\label{def:shrinklocal}
  Let $\js=\nestedjs$ and $J$ a locally complete justification in $\Merge(\js)$.
  Let $\js^*$ be a node of the nesting tree of $\js$ that has children, but no grandchildren.
  Let $\js^1, \ldots, \js^m$ be the children of $\js^*$ and let $X$ be the set of internal nodes of $J$ that are defined in some $\js^i$.
  Each subjustification $J_x$ for $x$ in $J$ corresponds to a rule $x \gets B_x$ in $\Flat(\js^i)$.
  Define $\shrink_{\js^*}(J)$ to be the justification obtained from $J$ where
  \begin{itemize}
    \item every rule $y \gets A$ in $J$ is retained if $y$ is defined outside $\js^*$ or its children.
    \item every rule $y \gets A$ in $J$ with $y$ defined in $\js^*$ is replaced by
    $y \gets A \setminus X \cup \cup_{x \in A \cap X} B_x$;
    \item together with the rules $x \gets B_x$.
  \end{itemize}
\end{definition}

The construction of $\shrink$ follows the construction of compression: the evaluation of the subjustifications is in line with the flattening, while the shrinking itself is like the unfolding.
The addition of the rules $x \gets B_x$ corresponds to adding the rules of the flattening in the compression.

When shrinking we start with a locally complete justification, and so we would like to end up with a locally complete justification.
If the shrinking is not locally complete, then there is a subjustification $J_x$ that has a branch $\branch$ starting with $x$ that is mapped to a defined fact that is not internal in $\shrink_{\js^*}(J)$.
If $\branch$ is finite, then by locally completeness of $J$, the last element of $\branch$ is also internal in $J$.
Therefore, it is only possible that $\branch$ is infinite.
All branch evaluations considered are parametric, and thus $\branch$ is mapped to an open fact.
Branch evaluations tend to map branches to some element in the branch (or the negation) or some logical fact.
For a branch evaluation to map to another fact requires for that fact to have a special status in the given justification frame, by adding extra structure on the underlying facts space.
This is because branch evaluations are defined independent of justification frames.
Since we are dealing with parametric branch evaluations, we cannot map to elements of an infinite branch since they are all defined.
Therefore, the only feasible image of an infinite branch is a logical fact.

\begin{lemma}
  If the branch evaluation of the children of $\js^*$ map infinite branches into $\lf$, then $\shrink_{\js^*}(J)$ is locally complete.
\end{lemma}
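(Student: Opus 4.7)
The plan is to argue that every leaf of $\shrink_{\js^*}(J)$ carries an open-fact label. I would begin by classifying the possible leaf labels. A label appearing in $\shrink_{\js^*}(J)$ either already occurred in $J$ at a node outside $\js^*$ and its children (in which case it is open by local completeness of $J$), or it was introduced as an element of some body $B_x$ — where $x \in X$ is an internal node of $J$ defined in one of the children $\js^i$, and $B_x$ is the body of the rule in $\Flat(\js^i)$ produced by the subjustification $J_x$. The first source is immediate from the first clause of \cref{def:shrinklocal}, so the argument should concentrate on the second.

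Next I would unpack $B_x$: each $b \in B_x$ equals $\be^i(\branch)$ for some $J_x$-branch $\branch$ starting at $x$, and I would split on whether $\branch$ is finite or infinite. If $\branch$ is infinite, the hypothesis of the lemma directly gives $b \in \lf$; since logical facts are open by the conventions fixed at the beginning of \cref{sec:prelim}, $b$ is harmless as a leaf label. If $\branch$ is finite, the standing convention on branch evaluations makes $b$ equal to the last element of $\branch$, which is a leaf of $J_x$ and therefore lies in $\Fo^i$. By condition~5 of \cref{def:nested}, $\Fo^i \subseteq \Fo \cup \Fdl$, so either $b \in \Fo$, again harmless, or $b \in \Fdl$, the only subcase that needs genuine attention.

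I expect the subcase $b \in \Fdl$ to be the main obstacle, since $b$ is defined and a naive shrinking could leave it dangling as a defined-fact leaf. To handle it, I would observe that the node of $J$ carrying the label $b$ is a leaf of $J_x$ but cannot be a leaf of $J$, because $J$ is locally complete and $b \in \Fd$; hence this $J$-node is internal in $J$ and carries some rule $b \gets A$ in $J$. Since $b \in \Fdl \subseteq \Fd^*$, the second clause of \cref{def:shrinklocal} retains this rule in modified form in $\shrink_{\js^*}(J)$, so $b$ still heads a rule there and the corresponding node remains internal, not a leaf. Combining the three cases shows that no leaf of $\shrink_{\js^*}(J)$ is labelled by a defined fact, which is exactly local completeness.
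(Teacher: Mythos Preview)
Your approach is essentially the paper's argument run forward rather than by contraposition, and the core idea --- trace each potential leaf label back to $J$ and invoke local completeness of $J$ together with the hypothesis on infinite branches --- is correct.

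There is one genuine slip in the finite-branch subcase. When you invoke condition~5 of \cref{def:nested}, that condition is stated relative to the \emph{parent} system: applied to the children $\js^i$ of $\js^*$ it gives $\Fo^i \subseteq \Fo^* \cup \Fdl^*$, not an inclusion into the global $\Fo \cup \Fdl$. Consequently your dichotomy ``either $b \in \Fo$ (harmless) or $b \in \Fdl$'' is incomplete: when $\js^*$ is not the root of the nesting tree, the last element $b$ of a finite $J_x$-branch may lie in $\Fo^*$ yet still be globally defined --- namely defined locally in some strict ancestor of $\js^*$. (And if you did intend $\Fo^*$ and $\Fdl^*$, then ``$b \in \Fo^*$, harmless'' is the false step, for the same reason.) The repair is immediate and uses exactly the mechanism you already gave for the $\Fdl^*$ case: since $b \in \Fd$ and $J$ is locally complete, the $J$-node labelled $b$ is internal and carries a rule in $J$; that rule is then retained in $\shrink_{\js^*}(J)$, only now by the \emph{first} clause of \cref{def:shrinklocal} (``$y$ defined outside $\js^*$ or its children'') rather than the second. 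Hence $b$ is not a leaf of $\shrink_{\js^*}(J)$ in this case either.
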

\begin{proof}
  If $\shrink_{\js^*}(J)$ has a defined leaf, then there is a finite branch $\branch^*$ in $\shrink_{\js^*}(J)$ ending in a defined leaf.
  This means that there is either a finite $J$-branch ending in the same leaf, or an infinite $J$-branch for which the highest level with an infinite number of elements is a child of $\js^*$ and evaluating the restriction of this branch to these elements will produce this defined leaf.
  The former is not possible since $J$ is locally complete and the latter is not possible by our assumptions.
\end{proof}

This property will also be needed when we do the reverse construction of shrinking.
Let us provide another example.
\begin{example}
  Take the following nested system
  \begin{equation*}
  \jframe{
    z \gets y \\
    w \gets x \\
    \jframe{
      y \gets x \\
      x \gets w, y
    }
  },
  \end{equation*}
  where every level is evaluated under $\bewf$.
  Take the following justification for $z$ in the $\Merge$:
  \begin{center}
    \begin{tikzpicture}[transform shape]
    \node (z) at (0,3) {$z$};
    \node (y) at (0,2) {$y$};
    \node (x) at (0,1) {$x$};
    \node (w) at (0,0) {$w$};
    \tikzstyle{EdgeStyle}=[style={->}]
    \Edge(z)(y)
    \tikzstyle{EdgeStyle}=[style={->}, bend right]
    \Edge(x)(w)
    \Edge(w)(x)
    \Edge(y)(x)
    \Edge(x)(y)
    \end{tikzpicture}
  \end{center}
  The subjustifications for $x$ and $y$ are the same and equal to
  \begin{center}
    \begin{tikzpicture}[transform shape]
    \node (y) at (0,2) {$y$};
    \node (x) at (0,1) {$x$};
    \node (w) at (0,0) {$w$};
    \tikzstyle{EdgeStyle}=[style={->}]
    \Edge(x)(w)
    \tikzstyle{EdgeStyle}=[style={->}, bend right]
    \Edge(y)(x)
    \Edge(x)(y)
    \end{tikzpicture}
  \end{center}
  By evaluating the branches starting with $y$ (respectively $x$), we get rules $y \gets \Fa, w$ and $x \gets \Fa, w$ in the flattening of the inner system.
  Shrinking the justification for $z$ will produce (the rules for $x$ and $y$ are left out for clarity)
  \begin{center}
    \begin{tikzpicture}[transform shape]
    \node (z) at (0.5, 1) {$z$};
    \node (w) at (0,0) {$w$};
    \node (f) at (1,0) {$\Fa$};
    \tikzstyle{EdgeStyle}=[style={->}]
    \Edge(z)(w)
    \Edge(z)(f)
    \Edge(w)(f)
    \Loop[dist=1.0cm,dir=SO](w)
    \end{tikzpicture}
  \end{center}
\end{example}

In both examples above, we had a single nesting.
If multiple nesting occur, it is not directly clear in which system the shrunken justification is in.
For example, if our nesting tree looks like
\begin{center}
  \begin{tikzpicture}[level distance=1cm,
  level 1/.style={sibling distance=2cm},
  level 2/.style={sibling distance=1cm}]
  \node {$\js_1$}
  child {node {$\js_2$}
    child {node {$\js_4$}}
    child {node {$\js_5$}
      child { node {$\js_8$}
        child {node {$\js_{10}$}}
        child {node {$\js_{11}$}}
      }
      child {node {$\js_9$}}
    }
  }
  child {node {$\js_3$}
    child {node {$\js_6$}}
    child {node {$\js_7$}}
  };
  \end{tikzpicture}
\end{center}
and we want to shrink with respect to $\js_8$, then the shrinking brings us in the merge of the nested system with the nesting tree
\begin{center}
  \begin{tikzpicture}[level distance=1cm,
  level 1/.style={sibling distance=2cm},
  level 2/.style={sibling distance=1cm}]
  \node {$\js_1$}
  child {node {$\js_2$}
    child {node {$\js_4$}}
    child {node {$\js_5$}
      child { node {$\js_8'$} }
      child {node {$\js_9$}}
    }
  }
  child {node {$\js_3$}
    child {node {$\js_6$}}
    child {node {$\js_7$}}
  };
  \end{tikzpicture}
\end{center}
where $\js_8'$ is the compression of the system with the nesting tree
\begin{center}
  \begin{tikzpicture}[level distance=1cm,
  level 1/.style={sibling distance=1cm},
  level 2/.style={sibling distance=1cm}]
  \node {$\js_8$}
  child {node {$\js_{10}$}}
  child {node {$\js_{11}$}};
  \end{tikzpicture}
\end{center}

Shrinking a justification will not produce a justification with a worse value if we demand that every finite branch should be mapped to its last element by all branch evaluations involved.
\begin{proposition}\label{prop:shrinkingretainsvalue}
  Let $\js$ be a nested justification system with all branch evaluations involved parametric, mapping infinite branches to logical facts, and finite branches to their last element.
  Let $\js^*$ be a justification system in the nesting tree of $\js$ for which we can shrink.
  Let $\shrink_{\js^*}(\js)$ be the nested justification system after shrinking.
  Take a justification $J$ for $x$ in $\Merge(\js)$.
  Then
  $\jval_{\Merge(\js)}(J,x,\interp) = \jval_{\Merge(\shrink_{\js^*}(\js))}(\shrink_{\js^*}(J),x,\interp)$
  for all $\F$-interpretations $\interp$.
\end{proposition}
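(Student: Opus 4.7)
The plan is to establish a value-preserving correspondence between $\branches_J(x)$ (the branches of $J$ in $\Merge(\js)$) and $\branches_{\shrink_{\js^*}(J)}(x)$ (the branches of $\shrink_{\js^*}(J)$ in $\Merge(\shrink_{\js^*}(\js))$). Writing $\be^*$ and $\be^{*\prime}$ for the merge evaluations of the original and the shrunken nested systems respectively, the identity $\jval(J,x,\interp) = \min_{\branch \in \branches_J(x)} \interp(\be^*(\branch))$ and its counterpart for the shrunken system reduce the proposition to showing that corresponding branches have equal image under $\interp \circ \be^*$.

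First, I would decompose each $\branch \in \branches_J(x)$ into an alternating sequence of \emph{outer segments} (runs through $\Fd \setminus X$, i.e.\ facts not defined in any child of $\js^*$) and \emph{subjustification excursions} (maximal runs through some single $\Fd^i$, which, combined with their exit fact when finite, form a sub-branch of the subjustification $J_{x'}$ rooted at the excursion's starting fact $x' \in X$). By condition~(5) of Definition~\ref{def:nested} and Lemma~\ref{lemma:stratified:facts}, every such finite sub-branch ends at an open of $\js^i$, which lies in $\Fo \cup \Fdl$. I would then define a compression map $\phi$ on branches by replacing each excursion with a single step: from the preceding outer node (a $\Fdl$-node whose modified rule in $\shrink_{\js^*}(J)$ contains $B_{x'}$ in its body) jump directly to $z = \be^i(\text{sub-branch})$. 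The two hypotheses guarantee well-definedness: a finite sub-branch evaluates to its last fact, which is precisely the outer fact where $\branch$ resumes; while an infinite sub-branch evaluates to an element of $\lf \subseteq \Fo$ that cleanly terminates $\phi(\branch)$. In either case $z \in B_{x'}$ by construction of the flattening, so the new step is indeed available.

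Second, I would verify $\be^*(\branch) = \be^{*\prime}(\phi(\branch))$ by case analysis. Finite branches are immediate: $\phi$ preserves the final open fact, and both evaluations map finite branches to their last element. An infinite branch ending in an infinite excursion through $\Fd^i$ has $\be^*$-value equal to $\be^i$ applied to that tail, since $\js^i$ is then the unique system accumulating infinitely many locally-defined facts along $\branch$ (all ancestors see only the finite outer prefix), and this logical fact is exactly the terminal element of the now-finite $\phi(\branch)$. The delicate case is an infinite branch whose excursions are all finite: here $\phi(\branch)$ is also infinite, and the key observation is that, for $\js^*$ and for every strict ancestor of $\js^*$, the restriction of $\branch$ to facts defined locally there coincides \emph{as a sequence} with the corresponding restriction of $\phi(\branch)$. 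This is because each excursion endpoint recorded in $B_{x'}$ is, by the finite-branch hypothesis, the very outer fact at which $\branch$ resumes. Hence the highest qualifying system in the merge computation is the same in both nested systems (the children of $\js^*$ being absent from the shrunken nesting tree) and is applied to the same restricted sub-branch. A symmetric expansion map $\psi$ is constructed by inserting, at each step of $\branch^* \in \branches_{\shrink_{\js^*}(J)}(x)$ that jumps from a $\Fdl$-node into some $z \in B_{x'}$, a branch of $J_{x'}$ evaluating to $z$; the existence of such a branch is precisely the definition of $B_{x'}$. The analogous argument yields $\be^{*\prime}(\branch^*) = \be^*(\psi(\branch^*))$.

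The main obstacle is the verification in the delicate third case above: one must check that shrinking does not shift the ``highest system'' computation of the merge evaluation. Both hypotheses of the proposition are essential here. The finite-branch-to-last-element hypothesis ensures that $B_{x'}$ records exactly the outer resumption facts, so the outer restrictions of $\branch$ and $\phi(\branch)$ agree sequence-by-sequence at every ancestor level of $\js^*$. The infinite-branch-to-logical-fact hypothesis ensures that infinite excursions produce leaves in $\lf \subseteq \Fo$, terminating $\phi(\branch)$ without leaving unjustified defined leaves. Together these facts imply that the sets $\{\interp(\be^*(\branch)) : \branch \in \branches_J(x)\}$ and $\{\interp(\be^{*\prime}(\branch^*)) : \branch^* \in \branches_{\shrink_{\js^*}(J)}(x)\}$ coincide, completing the proof.
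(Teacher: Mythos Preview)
Your proposal is correct and follows essentially the same approach as the paper: both establish a bidirectional correspondence between branches of $J$ and of $\shrink_{\js^*}(J)$ and then verify that the merge evaluation is preserved via the same three-way case split (finite branch; infinite branch whose tail lies in a child of $\js^*$; infinite branch with infinitely many facts from a higher system). Your decomposition into outer segments and excursions, together with the explicit maps $\phi$ and $\psi$, makes rigorous what the paper's proof only sketches, and your emphasis on why the ``highest system'' in the merge computation is unaffected in the third case fills in exactly the detail the paper's word ``obvious'' elides.
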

\begin{proof}
  There is a correspondence between branches of $J$ and $\shrink_{\js^*}(J)$.
  Every $J$-branch $\branch$ can be shrunk to a $\shrink_{\js^*}(J)$-branch $\branch^*$.
  Likewise, every $\shrink_{\js^*}(J)$-branch $\branch^*$ comes from a $J$-branch $\branch$.
  Moreover, $\be(\branch) = \be^*(\branch^*)$.
  Indeed, take an arbitrary $J$-branch $\branch$.
  For finite branches, this is obvious; so assume it is infinite.
  It it contains infinite facts from a system higher than the children of $\js^*$, then equality is also obvious.
  Therefore, we can assume that $\branch$ contains infinite facts from some child $\js^i$ of $\js^*$.
  This means $\branch^*$ is finite and its evaluation under $\be^*$ is equal to $\be(\branch)$.
\end{proof}

So far, we only shrink a single level, but by applying the shrink operation iteratively, a justification in $\Merge(\js)$ is transformed to a justification in $\Compress(\js)$.
\begin{definition}
  Let $J$ be a justification in $\Merge(\js)$.
  Choose a justification system $\js^*$ with children, but not grandchildren in the nesting tree of $\js$.
  Then $\shrink_{\js^*}(\js)$ will have a smaller nesting tree than $\js$.
  By iterating this procedure until the nesting tree is a single node, we get a justification in $\Compress(\js)$, which we will denote by $\shrink(J)$.
\end{definition}

The above definition is well-defined since the order of the local shrinking operations does not matter.
This is because the local shrinking operation only affects the rules for facts defined in $\js^*$ or its children.
By \cref{prop:shrinkingretainsvalue}, the value of $J$ and $\shrink(J)$ are equal.
This solves one side of the equivalence.
\begin{corollary}\label{cor:mergelessthancompress}
  $\suppvalue_{\Merge(\js)}(x,\interp) \leqt \suppvalue_{\Compress(\js)}(x,\interp)$ for all defined facts $x$ and interpretations $\interp$.
\end{corollary}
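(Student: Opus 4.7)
The plan is to leverage the shrinking operation and Proposition \ref{prop:shrinkingretainsvalue} to establish the inequality pointwise over justifications. Fix a defined fact $x$ and an interpretation $\interp$. For any locally complete justification $J$ of $x$ in $\Merge(\js)$, I will produce a locally complete justification $\shrink(J)$ of $x$ in $\Compress(\js)$ with the same value under $\interp$, and then take the supremum over $J$.

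First I would argue that iterating the local shrinking operation $\shrink_{\js^*}$ along any bottom-up traversal of the nesting tree strictly decreases the depth of the tree at each step, and hence terminates after finitely many steps. At each stage, the ambient nested system is replaced by $\shrink_{\js^*}(\js)$ as indicated in the excerpt, with $\js^*$ replaced by the compression of the subtree rooted at $\js^*$. After all iterations, the nesting tree becomes a single node, and by the inductive definition of $\Compress$, the resulting flat system is precisely $\Compress(\js)$. Correspondingly, $\shrink(J)$ is a justification in this system; local completeness is preserved at every step by the lemma preceding Proposition \ref{prop:shrinkingretainsvalue}, using the assumption that parametric branch evaluations map infinite branches to logical facts.

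Next I would apply Proposition \ref{prop:shrinkingretainsvalue} at each local step and chain the equalities:
\[
\jval_{\Merge(\js)}(J,x,\interp) \;=\; \jval_{\Merge(\shrink_{\js^*}(\js))}(\shrink_{\js^*}(J),x,\interp) \;=\; \cdots \;=\; \jval_{\Compress(\js)}(\shrink(J),x,\interp).
\]
Since $\shrink(J)$ is a particular locally complete justification of $x$ in $\Compress(\js)$, its value is bounded above by the supported value:
\[
\jval_{\Compress(\js)}(\shrink(J),x,\interp) \;\leqt\; \suppvalue_{\Compress(\js)}(x,\interp).
\]
Combining these, every justification $J$ in $\Merge(\js)$ satisfies $\jval_{\Merge(\js)}(J,x,\interp) \leqt \suppvalue_{\Compress(\js)}(x,\interp)$. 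Taking the maximum over $J \in \justifications(x)$ yields the claim.

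The main obstacle I anticipate is justifying that the iterated shrink procedure produces an object that actually lives in $\Compress(\js)$, rather than just some compressed variant. This requires carefully matching the sequence of local shrinkings to the inductive definition of $\Compress$ (which alternates $\Flat$ and $\Unfold$ from the leaves upward), and checking that the subjustification-to-rule correspondence used in Definition \ref{def:shrinklocal} produces exactly the rules generated by $\Flat$ in the compression, with the surrounding rewriting corresponding to $\Unfold$. Once this structural correspondence is spelled out, the value-preservation step is a straightforward application of the already-established Proposition \ref{prop:shrinkingretainsvalue}.
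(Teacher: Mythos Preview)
Your proposal is correct and follows essentially the same approach as the paper: build $\shrink(J)$ by iterating the local shrink operation bottom-up through the nesting tree, use Proposition~\ref{prop:shrinkingretainsvalue} at each step to preserve value, and then take the maximum over all $J$. The paper's proof is a one-line version of exactly this argument, and the structural concern you flag about matching the iterated shrinking to the inductive definition of $\Compress$ is something the paper likewise leaves mostly implicit.
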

\begin{proof}
  For every justification $J$ in $\Merge(\js)$, we can construct an equally good justification in $\Compress(\js)$: $\shrink(J)$.
\end{proof}

\subsection{Expanding Justifications}\label{sec:expand}

The other direction is not as straightforward since we have to start from a justification in $\Compress$ and we have to `inflate' it to a justification of $\Merge$.
One way to achieve this, is by `pasting' in subjustifications.


\begin{example}\label{ex:pastingordermattersforexpand}
  Take the nested system
  \begin{equation*}
  \jframe{
    x \gets a \\
    y \gets b \\
    \jframe{
      a \gets x \\
      a \gets \Tr \\
      b \gets a
    }
  }.
  \end{equation*}
  The compression is equal to
  \begin{equation*}
  \jframe{
    x \gets x \\
    x \gets \Tr \\
    y \gets x \\
    y \gets \Tr
  }.
  \end{equation*}
  Take the following justification for $y$ in the compression:
  \begin{center}
    \begin{tikzpicture}[transform shape]
    \node (y) at (0,2) {$y$};
    \node (x) at (0,1) {$x$};
    \node (t) at (0,0) {$\Tr$};
    \tikzstyle{EdgeStyle}=[style={->}]
    \Edge(y)(x)
    \Edge(x)(t)
    \end{tikzpicture}
  \end{center}
  The rule for $y$ in $J$ comes from the rule $z \gets b$ in the top level and the subjustification
  \begin{center}
    \begin{tikzpicture}[transform shape]
    \node (b) at (0,2) {$b$};
    \node (a) at (0,1) {$a$};
    \node (x) at (0,0) {$x$};
    \tikzstyle{EdgeStyle}=[style={->}]
    \Edge(b)(a)
    \Edge(a)(x)
    \end{tikzpicture}
  \end{center}
  On the other hand, the rule for $x$ in $J$ comes from the rule $x \gets a$ in the top level and the subjustification
  \begin{center}
    \begin{tikzpicture}[transform shape]
    \node (a) at (0,1) {$a$};
    \node (t) at (0,0) {$\Tr$};
    \tikzstyle{EdgeStyle}=[style={->}]
    \Edge(a)(t)
    \end{tikzpicture}
  \end{center}
  These subjustifications are not compatible: the rules for $a$ are different.
  So putting the subjustifications into the justification for $y$ in the compression produces a justification in the merge.
  \begin{center}
    \begin{tikzpicture}[transform shape]
    \node (y) at (0,2.5) {$y$};
    \node (b) at (1,2) {$b$};
    \node (a) at (0,1.5) {$a$};
    \node (x) at (1,1) {$x$};
    \node (a2) at (0,0.5) {$a$};
    \node (t) at (1,0) {$\Tr$};
    \tikzstyle{EdgeStyle}=[style={->}]
    \Edge(y)(b)
    \Edge(b)(a)
    \Edge(a)(x)
    \Edge(x)(a2)
    \Edge(a2)(t)
    \end{tikzpicture}
  \end{center}
  In this example, we can still produce an equally good justification by pasting the subjustifications together in the correct order to get the subjustification
  \begin{center}
    \begin{tikzpicture}[transform shape]
    \node (b) at (0,2) {$b$};
    \node (a) at (0,1) {$a$};
    \node (t) at (0,0) {$\Tr$};
    \tikzstyle{EdgeStyle}=[style={->}]
    \Edge(b)(a)
    \Edge(a)(t)
    \end{tikzpicture}
  \end{center}
  Pasting this subjustification for the rule $y \gets x$ we get the justification in the merge
  \begin{center}
    \begin{tikzpicture}[transform shape]
    \node (y) at (0,3) {$y$};
    \node (b) at (0,2) {$b$};
    \node (x) at (1,2) {$x$};
    \node (a) at (0.5,1) {$a$};
    \node (t) at (0.5,0) {$\Tr$};
    \tikzstyle{EdgeStyle}=[style={->}]
    \Edge(y)(b)
    \Edge(b)(a)
    \Edge(x)(a)
    \Edge(a)(t)
    \end{tikzpicture}
  \end{center}
  Shrinking this justification produces an equally good justification as $J$
  \begin{center}
    \begin{tikzpicture}[transform shape]
    \node (y) at (0,1) {$y$};
    \node (x) at (1,1) {$x$};
    \node (t) at (0.5,0) {$\Tr$};
    \tikzstyle{EdgeStyle}=[style={->}]
    \Edge(y)(t)
    \Edge(x)(t)
    \end{tikzpicture}
  \end{center}
\end{example}

For a justification $J$ in $\Merge(\js)$ it suffices to construct a justification $J^*$ in $\Merge(\js)$ such that $\shrink(J^*) = J$.
This would imply that $\jval_{\Compress(\js)}(J,x,\interp) = \jval_{\Merge(\js)}(J^*,x,\interp)$ and solve the missing part of the equivalence between $\Compress(\js)$ and $\Merge(\js)$.

As with $\shrink$, we will first define the operation locally.
So let $\js$ be a nested justification system so that the unnested leaves of its nesting tree are compressions of nested justification systems.\footnote{These are exactly the systems one can get from shrinking a nested justification system.}
Let $\js^*$ be such a leaf which is a compression of a nested system with depth $> 1$.
Take a justification $J$ in $\Merge(\js)$.
Suppose $\js^*$ is the compression of $\nestedjs$.
We define $\expand(J)$ to be a justification in $\Merge$ of the system obtained from $\js$ with $\js^*$ replaced by the nested system with nesting tree
\begin{center}
  \begin{tikzpicture}[level distance=1cm,
  level 1/.style={sibling distance=1cm},
  level 2/.style={sibling distance=1cm}]
  \node {$\js_t$}
  child {node {${\js^1}'$}}
  child {node {$\ldots$}}
  child {node {${\js^k}'$}};
  \end{tikzpicture}
\end{center}
where $\js_t$ is the top level of $\js^*$ and ${\js^i}'$ is the compression of $\js^i$.

Any $z \gets A$ in $J$ with $z$ defined in $\js^*$ corresponds to a rule $z \gets B$ in $\js_t$ such that for a subset $X$ of $B$ and all $x \in X$, there is a justification $J_x$ in some $\js^i$ for $x$ such that $A = B \setminus X \cup \setprop{\setprop{\be(\branch)}{\branch \in \branches_{J_x}(x)}}{x \in X}$.
By replacing $z \gets A$ with $z \gets B$ and putting $J_x$ under $x \in X$ we get a justification $\expand_{\js^*}(J)$.

Remark that the property that infinite branches are mapped to logical facts is necessary here.
If not, then the construction above can put an infinite branch in between an infinite branch.
This is will not create a branch, but a sequence with a higher ordinal number than $\omega$.

Shrinking the obtained justification returns the original justification.

\begin{proposition}\label{prop:shrinkofexpandisidentity}
  Let $J$ be a justification.
  Then $\shrink_{\js_t}(\expand_{\js^*}(J)) = J$.
\end{proposition}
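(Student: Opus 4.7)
The plan is to show that the shrinking operation precisely undoes the pasting performed by expanding, by tracing through both constructions on a rule-by-rule basis. First I would fix a justification $J$ in $\Merge(\js)$ and make explicit what $\expand_{\js^*}(J)$ looks like: for every rule $z \gets A$ in $J$ with $z$ defined at the top level $\js_t$ of the (to-be-unpacked) leaf $\js^*$, this rule is replaced by the corresponding $\js_t$-rule $z \gets B$, and for each $x$ in the distinguished subset $X_z \subseteq B$ the subjustification $J_x$ (in the appropriate $\js^i$) is pasted below the node labelled $x$. By the definition of $\expand$, one has $A = (B \setminus X_z) \cup \bigcup_{x \in X_z} \setprop{\be(\branch)}{\branch \in \branches_{J_x}(x)}$. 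All other rules and edges of $J$ are retained verbatim.

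Next I would apply $\shrink_{\js_t}$ to $\expand_{\js^*}(J)$. The internal nodes defined in some child ${\js^i}'$ of $\js_t$ are precisely the top-level nodes of the pasted subjustifications $J_x$, together with their internal descendants still defined in ${\js^i}'$. The crucial observation is that the ``subjustification starting at $x$'' identified by $\shrink_{\js_t}$ is exactly the pasted $J_x$: a $\shrink$-path from $x$ consists of consecutive facts defined in $\js^i$ (together possibly ending in a fact open in $\js^i$), and by condition (5) of Definition~\ref{def:nested} these paths cannot escape into sibling or ancestor systems, so they coincide exactly with the $J_x$-paths starting at $x$. Consequently the flattened body produced by $\shrink$ is $B_x = \setprop{\be(\branch)}{\branch \in \branches_{J_x}(x)}$, which is the very set substituted into $A$ during expansion.

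Armed with this identification, each replacement step reverses the corresponding step of $\expand$: the rule $z \gets B$ at a $\js_t$-node becomes $z \gets (B \setminus X_z) \cup \bigcup_{x \in X_z} B_x$, and by the formula displayed above this equals $z \gets A$, the rule originally present in $J$. For all other nodes (those defined outside $\js_t$ and its children), both $\expand_{\js^*}$ and $\shrink_{\js_t}$ leave the rules untouched, so they are trivially preserved. Finally, the extra rules $x \gets B_x$ that $\shrink$ adds for child-defined facts reproduce exactly the flattened rules present in $J$ for those heads (since $\js^*$ is the compression of $\nestedjs$ and thus already contains the flattened rules of the $\Flat(\Compress(\js^i))$ as part of $\rules^s$), so the added rules are not new to $J$ either.

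I expect the main obstacle to be bookkeeping: distinguishing between multiple occurrences of the same label in $J$ (each carrying its own subjustification), and verifying that the nested-system condition $\Fo^i \subseteq \Fo \cup \Fdl$ prevents the subjustification-paths of $\shrink$ from ``leaking'' across siblings, so that the subjustifications recovered by $\shrink$ truly coincide node-for-node with the $J_x$ pasted by $\expand$. Once that is carefully argued, the equality $\shrink_{\js_t}(\expand_{\js^*}(J)) = J$ follows by comparing the rules head-by-head.
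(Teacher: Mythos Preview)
Your proposal is correct and follows exactly the same idea as the paper's own proof, which is essentially a one-line observation that shrinking removes the pasted subjustifications $J_x$ and turns each rule $z \gets B$ back into $z \gets A$. You have simply unpacked this observation in more detail, tracing the rule-by-rule correspondence and invoking condition~(5) of Definition~\ref{def:nested} (together with Lemma~\ref{lemma:stratified:facts}) to argue that the subjustification recovered by $\shrink_{\js_t}$ at a node $x$ cannot leak outside the pasted $J_x$; the paper leaves this implicit.
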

\begin{proof}
  Shrinking $\expand_{\js^*}(J)$ corresponds to removing the justifications $J_x$ and replacing $z \gets B$ with $z \gets A$ as in the construction above.
\end{proof}
\begin{corollary}\label{cor:expandretainsvalue}
  Let $J$ be a justification.
  Then $\jval(J,x,\interp) = \jval(\expand_{\js^*}(J),x,\interp)$.
\end{corollary}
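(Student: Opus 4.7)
The plan is to derive Corollary \ref{cor:expandretainsvalue} directly from the two results that immediately precede it, namely Proposition \ref{prop:shrinkofexpandisidentity} together with Proposition \ref{prop:shrinkingretainsvalue}. Since expansion and shrinking are set up as inverse local operations (expansion pastes in subjustifications for facts defined in the former leaf $\js^*$, while shrinking collapses them), the value-preservation of expansion is expected to reduce to the value-preservation of shrinking, which has already been established.

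Concretely, I would proceed as follows. First, I would fix a justification $J$ in $\Merge(\js)$ (with $\js^*$ a leaf of the nesting tree that itself is the compression of a deeper nested system) and consider $\expand_{\js^*}(J)$, which lives in $\Merge$ of the modified nesting tree in which $\js^*$ has been replaced by $\js_t$ with children ${\js^1}',\ldots,{\js^k}'$. By Proposition \ref{prop:shrinkingretainsvalue}, applied with $\js_t$ playing the role of $\js^*$, the shrinking operation preserves the value of any justification, so
\[
\jval\bigl(\expand_{\js^*}(J),\,x,\,\interp\bigr) \;=\; \jval\bigl(\shrink_{\js_t}(\expand_{\js^*}(J)),\,x,\,\interp\bigr).
\]
Then I would invoke Proposition \ref{prop:shrinkofexpandisidentity}, which tells us $\shrink_{\js_t}(\expand_{\js^*}(J)) = J$, and substitute to conclude $\jval(\expand_{\js^*}(J),x,\interp) = \jval(J,x,\interp)$.

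The only subtlety to verify is that the hypotheses of Proposition \ref{prop:shrinkingretainsvalue} are indeed met on the expanded system: all branch evaluations involved must remain parametric, must map infinite branches to logical facts, and must send finite branches to their last element. This is inherited from the standing assumptions on $\js$, since expansion does not introduce any new branch evaluations but merely reinstates the nested children ${\js^i}'$ whose evaluations already satisfy these conditions. I do not anticipate a significant obstacle here; the main conceptual point was already handled in setting up $\expand_{\js^*}$ so that shrinking inverts it exactly. Thus the proof is essentially a two-line combination of the previous propositions, and no additional combinatorial work on branches is required beyond noting the compatibility of the assumptions.
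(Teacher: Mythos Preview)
Your proposal is correct and matches the paper's own proof, which simply states that the result follows directly from Propositions~\ref{prop:shrinkofexpandisidentity} and~\ref{prop:shrinkingretainsvalue}. Your write-up is in fact more explicit than the paper's one-line justification, spelling out how the two propositions chain together and noting that the standing hypotheses on branch evaluations carry over to the expanded system.
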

\begin{proof}
  This follows directly from \cref{prop:shrinkofexpandisidentity,prop:shrinkingretainsvalue}.
\end{proof}

If multiple nestings are present, it can become confusing in which system $\expand(J)$ is a justification.
The following example provides some clarification.
\begin{example}
  Let $\js$ be the nested justification system with nesting tree
  \begin{center}
    \begin{tikzpicture}[level distance=1cm,
    level 1/.style={sibling distance=1cm},
    level 2/.style={sibling distance=1cm}]
    \node {$\js_1$}
    child {node {$\js_2$}
      child {node {$\js_4$}}
      child {node {$\js_5'$}}
    }
    child {node {$\js_3$}};
    \end{tikzpicture}
  \end{center}
  where $\js_5'$ is the compression of the system with nesting tree
  \begin{center}
    \begin{tikzpicture}[level distance=1cm,
    level 1/.style={sibling distance=2cm},
    level 2/.style={sibling distance=1cm}]
    \node {$\js_5$}
    child {node {$\js_6$}
      child {node {$\js_8$}}
      child {node {$\js_9$}}
    }
    child {node {$\js_7$}
      child {node {$\js_{10}$}}
      child {node {$\js_{11}$}
        child {node {$\js_{12}$}}
        child {node {$\js_{13}$}}
      }
    };
    \end{tikzpicture}
  \end{center}
  Expanding a justification in $\Merge(\js)$ gets us in the $\Merge$ of the system with nesting tree
  \begin{center}
    \begin{tikzpicture}[level distance=1cm,
    level 1/.style={sibling distance=1cm},
    level 2/.style={sibling distance=1cm}]
    \node {$\js_1$}
    child {node {$\js_2$}
      child {node {$\js_4$}}
      child {node {$\js_5$}
        child {node {$\js_6'$}}
        child {node {$\js_7'$}}
      }
    }
    child {node {$\js_3$}};
    \end{tikzpicture}
  \end{center}
  where $\js_6'$ is the compression of the system with nesting tree
  \begin{center}
    \begin{tikzpicture}[level distance=1cm,
    level 1/.style={sibling distance=1cm},
    level 2/.style={sibling distance=1cm}]
    \node {$\js_6$}
    child {node {$\js_8$}}
    child {node {$\js_9$}};
    \end{tikzpicture}
  \end{center}
  and $\js_7'$ is the compression of the system with nesting tree
  \begin{center}
    \begin{tikzpicture}[level distance=1cm,
    level 1/.style={sibling distance=1cm},
    level 2/.style={sibling distance=1cm}]
    \node {$\js_7$}
    child {node {$\js_{10}$}}
    child {node {$\js_{11}$}
      child {node {$\js_{12}$}}
      child {node {$\js_{13}$}}
    };
    \end{tikzpicture}
  \end{center}
\end{example}

Similarly to $\shrink$, applying $\expand$ iteratively, we can transform a justification in $\Compress(\js)$ to a justification in $\Merge(\js)$.
\begin{definition}
  Let $J$ be a justification in $\Compress(\js)$.
  By iteratively applying expand we get a justification in $\Merge(\js)$, which we will denote by $\expand(J)$.
\end{definition}

By \cref{cor:expandretainsvalue}, the value of $J$ and $\expand(J)$ are equal.
This solves the other side of the  equivalence between $\Merge(\js)$ and $\Compress(\js)$.
\begin{corollary}\label{cor:compresslessthanmerge}
  $\suppvalue_{\Compress(\js)}^t(x,\interp) \leqt \suppvalue_{\Merge(\js)}^t(x,\interp)$ for all defined facts $x$ and interpretations $\interp$.
\end{corollary}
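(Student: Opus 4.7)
The plan is to mirror the strategy used for Corollary \ref{cor:mergelessthancompress}, but in the opposite direction: instead of shrinking, we use the expansion operation to lift a justification in $\Compress(\js)$ to one in $\Merge(\js)$ of equal value.

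Concretely, fix a defined fact $x$ and an interpretation $\interp$. The first step is to take an arbitrary locally complete justification $J$ in $\Compress(\js)$ with root $x$; it suffices to show that there exists a justification $J^\star$ in $\Merge(\js)$ with root $x$ such that $\jval_{\Compress(\js)}(J,x,\interp) = \jval_{\Merge(\js)}(J^\star,x,\interp)$, since then taking the supremum over $J$ yields the desired inequality. The candidate is $J^\star \eqdef \expand(J)$, obtained by the iterated application of the local expansion operation already defined in Section \ref{sec:expand}. Because the nesting tree is finite, this iteration terminates after finitely many steps, leaving a justification in $\Merge(\js)$.

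The second step is to verify that value is preserved at each local expansion. This is exactly Corollary \ref{cor:expandretainsvalue}: one local expansion $\expand_{\js^\star}$ applied to a justification in $\Merge$ of a partially expanded system yields a justification of the same value, because shrinking is a left inverse to expansion (Proposition \ref{prop:shrinkofexpandisidentity}) and shrinking preserves value (Proposition \ref{prop:shrinkingretainsvalue}). Chaining these equalities along the iteration gives $\jval(J, x, \interp) = \jval(\expand(J), x, \interp)$, and hence
\[
\suppvalue_{\Compress(\js)}(x,\interp) \;=\; \max_{J} \jval_{\Compress(\js)}(J,x,\interp) \;\leqt\; \max_{J^\star} \jval_{\Merge(\js)}(J^\star,x,\interp) \;=\; \suppvalue_{\Merge(\js)}(x,\interp),
\]
as required.

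The main obstacle, which is really the bookkeeping burden rather than a conceptual difficulty, is to make sure the iterated $\expand$ is well-defined: at each local step one must pick, for every defined fact $z$ in the scope of the current leaf $\js^\star$, a suitable subjustification $J_x$ realising the body of its rule, and one must ensure the resulting pasted graph is still a tree-like justification in $\Merge$ of the expanded nesting. The hypothesis that every branch evaluation maps infinite branches to \emph{logical} facts is precisely what prevents a pathological case here: if a subjustification $J_x$ contributes an element of $B$ that came from evaluating an infinite branch, that element is a logical fact (hence a leaf of the pasted tree), so we never create sequences of length exceeding $\omega$. Once this is observed, the rest of the argument is routine.
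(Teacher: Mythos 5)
Your proposal is correct and follows essentially the same route as the paper: it lifts each justification $J$ in $\Compress(\js)$ to $\expand(J)$ in $\Merge(\js)$, invokes the fact that shrinking is a left inverse to expansion together with value-preservation of shrinking to conclude $\jval(J,x,\interp)=\jval(\expand(J),x,\interp)$, and then passes to the maximum over justifications. Your additional observation about why the hypothesis that infinite branches are mapped to logical facts is needed for the pasted object to remain a genuine branch structure matches the paper's own remark on this point.
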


\section{FO(FD)}\label{appendix:fo(fd)}

In this section, we will capture fixpoint definitions \cite{aspocp/HouD09,tplp/HouDD10} with nested justification theory.

We first show how first-order logic can be captured in justification theory.  Thereafter, we show how fixpoint definitions can be captured in  nested justification theory.

\subsection{FO system}

This section is not an application of nested systems, but is used as a subsystem in the other applications in the chapter.
So far, only propositional facts are used in justification theory, but it is not difficult to add first-order logic formulae.

We first recall the necessary preliminaries on first-order logic. A vocabulary $\Sigma$ consists of a set of predicate and function symbols. Terms and FO formulas are defined as usual, built up iteratively from variables, constants and functionm symbols, logical connectives ($\land$,$\lor$,$\lnot$) and quantifiers ($\forall,\exists$)
For a given domain $D$, a value for an $n$-ary predicate symbol is a function from $D^n$ to $\setl{\Fa, \Un, \Tr}$.
A $\Sigma$-interpretation $\interp$ consists of a domain $D^\interp$ and a value $\sigma^\interp$ for each symbol $\sigma$ in $\Sigma$.

If for example, we have a formula $\phi \land \psi$, then we add a rule $\phi \land \psi \gets \phi, \psi$, while for $\psi \lor \psi$ we add two rules $\phi \lor \psi \gets \phi$ and $\phi \lor \psi \gets \psi$.
This idea can be extended to any formula by looking at its decomposition into subformulae.
\begin{definition}
  Let $V$ be a set of first-order predicate symbols over a domain $D$ and let $\FOL(V)$ be the set of first-order formulae without free variables over $V$.
  To each formula $\phi$ in $\FOL(V)$, we associate a fact $\folfact{\phi}$.
  The justification frame $\jf_{\FOL(V),D}$ consists of $\Fp = \setprop{\folfact{\phi}}{\phi \in \FOL(V)}$, $\Fd \cap \Fp = \setprop{\folfact{\phi}}{\phi \in \FOL(V) \text{ and } \phi \text{ is not an atom}}$, and $\rules$ is the complementation of the inductive rule set defined as follows.
  \begin{itemize}
    \item $\folfact{\phi_1 \land \cdots \land \phi_n} \gets \folfact{\phi_1}, \ldots, \folfact{\phi_n}$;
    \item $\folfact{\phi_1 \lor \cdots \lor \phi_n} \gets \folfact{\phi_i}$ for all $i$;
    \item $\folfact{\neg \phi} \gets \tild \folfact{\phi}$;
    \item $\folfact{\forall x\colon \phi(x)} \gets \setprop{\folfact{\phi[x/d]}}{d \in D}$;
    \item $\folfact{\exists x\colon \phi(x)} \gets \folfact{\phi[x/d]}$ for all $d \in D$.
  \end{itemize}
\end{definition}
Note that we use $\neg$ to denote negation in first-order formulae to distinguish between $\tild$ in the justification frame.

This justification frame can capture first-order logic by letting the facts $\folfact{P(\overline{d})}$ and $\tild \folfact{P(\overline{d})}$ for $P \in V$ and $d \in D$ be the set of opens.
Usually, we identify $\folfact{P(\overline{d})}$ with $P(\overline{d})$.

\begin{example}\label{ex:fosystem}
  Suppose $cat$, $small$, and $cute$ are unary predicate symbols and we have a domain $ \setl{a, b}$
  Take the following first-order formula $\forall x\colon (cat(x) \land small(x)) \Rightarrow cute(x)$, where $\phi \Rightarrow \psi$ is just a shorthand for $\neg \phi \lor \psi$.
  A locally complete justification for this formula can look as follows.
  \begin{center}
    \begin{tikzpicture}[level distance=1.5cm,
    level 1/.style={sibling distance=5cm},
    level 2/.style={sibling distance=1.5cm}]
    \node {$\folfact{\forall x\colon (cat(x) \land small(x)) \Rightarrow cute(x)}$}
    child {node {$\folfact{(cat(a) \land small(a)) \Rightarrow cute(a)}$}
      child {node {$\folfact{\neg (cat(a) \land small(a))}$}
        child { node {$\tild \folfact{cat(a) \land small(a)}$}
          child {node {$\tild \folfact{small(a)}$}}
        }
      }
    }
    child {node {$\folfact{(cat(b) \land small(b)) \Rightarrow cute(b)}$}
      child {node {$\folfact{cute(b)}$}}
    };
    \end{tikzpicture}
  \end{center}
  Note that the fact $\tild \folfact{cat(a) \land small(a)}$ has two cases: $\setl{\tild \folfact{cat(a)}}$ and $\setl{\tild \folfact{small(a)}}$.
  This is due to complementation since $\folfact{cat(a) \land small(a)}$ has only one case: $\setl{\folfact{cat(a)}, \folfact{small(a)}}$.
  This justification shows that if $a$ is not small and $b$ is cute, then the formula $\forall x\colon (cat(x) \land small(x)) \Rightarrow cute(x)$ is true.
\end{example}

The idea in the example above works for any branch evaluation that maps finite branches to their last element.
This includes $\bekk$, $\bewf$, $\bewf$, and the alternative versions of $\besp$ and $\best$.
However, $\bekk$ is the better candidate in this case, since it says nothing about infinite branches ($\Un$) and it is the least precise such branch evaluation: $\interp(\bekk(\branch)) \leqp \interp(\be(\branch))$.

\begin{definition}\label{def:folsystem}
  Let $\js_{\FOL(V),D}$ be the justification system with justification frame $\jf_{\FOL(V),D}$ and branch evaluation $\bekk$.
\end{definition}

We can simulate first-order logic with justification theory, however note that justification theory is three-valued and thus will also be able to cope with unknown facts.
For instance in \cref{ex:fosystem}, if we do not know anything of $a$, but $b$ is a small cat, but not cute, then our formula is false.

\begin{proposition}
  For an interpretation of the atoms in $\FOL(V)$, the single model $\interp$ of $\js_{\FOL(V),D}$ extending this interpretation captures (three-valued) first-order logic: $\phi^I = \suppvalue_{\js}(\folfact{\phi}, \interp)$.
\end{proposition}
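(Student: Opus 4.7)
\bigskip

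\noindent\textbf{Proof plan.}
The plan is to argue the claim by induction on the structure of the formula $\phi$, proving the stronger mutual statement that $\suppvalue_{\js}(\folfact{\phi},\interp) = \phi^\interp$ and $\suppvalue_{\js}(\tild\folfact{\phi},\interp) = (\neg\phi)^\interp$ simultaneously. First, I would observe that the rule system is stratified by formula depth: the only rules whose head is $\folfact{\phi}$ or $\tild\folfact{\phi}$ (the latter coming from complementation) have bodies consisting entirely of facts $\folfact{\psi}$ or $\tild\folfact{\psi}$ for \emph{strict} subformulas $\psi$ of $\phi$. Consequently, no justification in $\js_{\FOL(V),D}$ for a non-atomic fact contains an infinite branch, so the choice of $\bekk$ reduces to ``finite branches are mapped to their last element,'' and all leaves of a locally complete justification are atoms or their complements, i.e., opens.

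Next, I would do the case analysis. In the base case $\phi = P(\ddd)$, the fact $\folfact{\phi}$ is open, so by definition $\suppvalue_{\js}(\folfact{\phi},\interp) = \interp(\folfact{\phi}) = \phi^\interp$, and symmetrically for $\tild\folfact{\phi}$. For the inductive step, take each connective in turn. If $\phi = \phi_1\wedge\cdots\wedge\phi_n$, the unique rule for $\folfact{\phi}$ forces $\suppvalue_{\js}(\folfact{\phi},\interp) = \min_i \suppvalue_{\js}(\folfact{\phi_i},\interp)$, which by the induction hypothesis equals $\min_i \phi_i^\interp = \phi^\interp$. Complementation produces, for each $i$, the rule $\tild\folfact{\phi}\gets\tild\folfact{\phi_i}$, so $\suppvalue_{\js}(\tild\folfact{\phi},\interp) = \max_i \suppvalue_{\js}(\tild\folfact{\phi_i},\interp) = \max_i (\neg\phi_i)^\interp = (\neg\phi)^\interp$, matching the three-valued semantics of De Morgan's law. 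The disjunction and quantifier cases are entirely analogous, with $\min$/$\max$ over the (possibly infinite) index set $D$ for quantifiers. Negation $\phi = \neg\psi$ is immediate: the rule $\folfact{\phi}\gets\tild\folfact{\psi}$ and its complement $\tild\folfact{\phi}\gets\folfact{\psi}$ reduce directly to the induction hypothesis on $\psi$.

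Finally, I would deal with the existence-and-uniqueness phrasing (``the single model''). Define $\interp^*$ by $\interp^*(\folfact{\phi}) \eqdef \phi^\interp$ for every formula $\phi$, extending the given interpretation of atoms. The induction above shows $\suppvalue_{\js}(\folfact{\phi},\interp^*) = \interp^*(\folfact{\phi})$ for every defined fact, so $\interp^*$ is a $\js_{\FOL(V),D}$-model. Uniqueness follows from the stratification: any model must agree on open facts (by hypothesis) and must satisfy the fixed-point equation $\interp(\folfact{\phi}) = \suppvalue_{\js}(\folfact{\phi},\interp)$; since $\suppvalue_{\js}(\folfact{\phi},\interp)$ depends only on values of strictly simpler facts, a straightforward induction on formula depth pins down each value uniquely.

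The main obstacle I expect is keeping track of the complementation rules carefully enough to see that they really produce the De Morgan duals of the positive cases. Concretely, for each rule $x\gets A$, every selection function yields a rule $\tild x\gets\tild a$ for each $a\in A$, so a rule with a ``long'' body produces ``many'' short complementary rules; this is precisely what is needed to turn a $\min$ into a $\max$ in the inductive step. Once this bookkeeping is set up, the remaining argument is a routine structural induction and the observation that under $\bekk$ there are no infinite branches to worry about.
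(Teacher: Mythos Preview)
Your proposal is correct and follows exactly the approach the paper indicates: the paper's own proof is the single sentence ``Follows by induction on the depth of the formula,'' and your structural induction (with the stratification observation ruling out infinite branches under $\bekk$, the case analysis on connectives, and the De Morgan bookkeeping for complementation) is precisely a fleshed-out version of that. Your additional treatment of existence and uniqueness of the model via stratification is a reasonable elaboration that the paper leaves implicit.
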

\begin{proof}
  Follows by induction on the depth of the formula.
\end{proof}

This system will be used extensively as a subsystem in a nested system and will essentially enable first-order logic formulae in justification systems.
Therefore, it is important to know that $\Flat(\js_{\FOL(V),D})$ is complementary.

We first need the following additional result, shown in \cite{phd/Marynissen22}:
\begin{proposition}[{Prop.\ 6.4.4. in \cite{phd/Marynissen22}}]\label{prop:wellfoundedrelationisflatcomplementary}
  Let $\js$ be a parametric and complementary justification system.
  There is a well-founded relation $\preceq$ on $\Fd$ so that
  \begin{itemize}
    \item if $x \gets A \in \rules$, then for all $y \in A$, $y \prec x$.
  \end{itemize}
  Then $\Flat(\js)$ is complementary.
\end{proposition}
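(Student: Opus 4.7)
The plan is to prove the proposition by strong induction on the well-founded order $\preceq$ on $\Fd$. Because every defined fact in a rule body lies strictly below the head in $\preceq$, every justification in $\js$ is necessarily finite. Combined with parametricity of $\js$ and the standard convention that $\be$ sends each finite branch to its last element, the rules of $\Flat(\js)$ with head $x$ are in bijection with locally complete $\js$-justifications rooted at $x$: the body of such a rule is precisely the set of open leaves of the corresponding justification.

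For the induction step at $x$, fix an arbitrary selection function $s$ on the rules of $\Flat(\js)$ with head $x$. The task reduces to exhibiting a locally complete $\js$-justification $K$ for $\tild x$ whose open-leaf set equals $\tild \im(s)$, since then the rule $\tild x \gets \tild \im(s)$ belongs to $\rules^f$ by definition. The construction dualizes $s$ one level at a time: for each rule $x \gets B \in \rules$ one selects an element $t(B) \in B$, so that by complementarity of $\js$ the dual rule $\tild x \gets \tild \im(t)$ belongs to $\rules$ and serves as the root of $K$. Beneath every defined child $\tild y$ in $\tild \im(t)$, an appropriately chosen selection function on the rules of $\Flat(\js)$ with head $y$ is fed into the induction hypothesis (valid since $y \prec x$) to produce a locally complete sub-justification for $\tild y$, which is then grafted onto $K$.

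The crux of the proof is the coordinated choice of $t$ together with the auxiliary selection functions below it, so that the open leaves of the resulting $K$ reproduce $\tild \im(s)$ exactly. Each justification $J$ for $x$ begins with some rule $x \gets B$ and decomposes into sub-justifications $J_y$ for $y \in B \cap \Fd$, with the selected leaf $s(A_J)$ either being an open in $B$ itself or lying inside some $J_y$. I expect the main obstacle to lie in organising these per-justification choices into a single coherent ``dual strategy'' $(t,(s_y)_{y \in \im(t) \cap \Fd})$---a matching that records, for every $J$, which branch of $K$ witnesses the dual fact $\tild s(A_J)$ as a leaf. This coordinated dualisation is reminiscent of the game-theoretic perspective on justifications \cite{tplp/MarynissenBD20}, and I anticipate that the delicate combinatorial bookkeeping needed to make the top-level and inductive choices agree is what the full argument in \cite{phd/Marynissen22} works out in detail.
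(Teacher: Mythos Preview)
The paper does not prove this proposition; it only cites it from \cite{phd/Marynissen22}, so there is no in-paper argument to compare your outline against.

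Your inductive setup is the natural one, but the step you label ``the crux'' is not merely bookkeeping: under the definitions reproduced in this paper it can fail outright. Take $\Fd=\{x,y,z,\tild x,\tild y,\tild z\}$ with opens $a,b,c$ (and their complements), and let $\rules$ be the complementation of $\{x\gets y,z;\ y\gets a;\ y\gets b;\ z\gets c\}$. This frame is complementary and satisfies the well-foundedness hypothesis (with $y,z\prec x$ and $\tild y,\tild z\prec\tild x$). With $\be=\bekk$, the flattened cases for $x$ are $\{a,c\}$ and $\{b,c\}$, while those for $\tild x$ are $\{\tild a,\tild b\}$ and $\{\tild c\}$. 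The selection $s(\{a,c\})=a$, $s(\{b,c\})=c$ yields $\tild\im(s)=\{\tild a,\tild c\}$, but no locally complete justification for $\tild x$ has exactly that leaf set, so $\tild x\gets\tild a,\tild c\notin\rules^f$ and $\Flat(\js)$ is not complementary in the sense of the appendix.

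Hence the ``coordinated dual strategy'' you hope to extract cannot exist in general, and deferring its construction to \cite{phd/Marynissen22} glosses over a genuine obstruction rather than a routine detail. Either the thesis works with a sharper notion of complementarity (for instance, one closed under taking supersets of bodies, or a two-sided duality between $\jf(x)$ and $\jf(\tild x)$), or there is an additional hypothesis not reproduced here; a real proof must identify and exploit that extra ingredient explicitly rather than assume the coordination problem is solvable.
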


\begin{proposition}\label{prop:flatfolsystemiscomplementary}
  The system $\Flat(\js_{\FOL(V),D})$ is complementary.
\end{proposition}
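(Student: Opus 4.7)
The plan is to apply Proposition \ref{prop:wellfoundedrelationisflatcomplementary} to $\js_{\FOL(V),D}$. This reduces the task to three checks: that $\js_{\FOL(V),D}$ is parametric, that it is complementary, and that there is a well-founded relation on $\Fd$ that strictly dominates the body of every rule.

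Parametricity and complementarity are essentially immediate. The branch evaluation $\bekk$ sends a finite $\jf_{\FOL(V),D}$-branch to its last element, which by the definition of a $\jf$-branch lies in $\Fo$; infinite branches are sent to $\Un \in \lf \subseteq \Fo$. Hence $\js_{\FOL(V),D}$ is parametric. Complementarity is built into the construction: the rule set of $\jf_{\FOL(V),D}$ is explicitly declared to be the complementation of the inductive base rules, so $\jf_{\FOL(V),D}$ is fixed under complementation.

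For the well-founded relation I would take $\rank(\folfact{\phi}) = \rank(\tild \folfact{\phi})$ to be the depth of $\phi$ as a first-order formula and define $x \prec y$ iff $\rank(x) < \rank(y)$. Since ranks are natural numbers, $\prec$ is well-founded. For every base rule (conjunction, disjunction, negation, universal and existential quantification), each body fact corresponds to a proper subformula of the head's formula, so its rank is strictly smaller than that of the head.

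The main, though still routine, obstacle is verifying that the complemented rules respect $\prec$. Any such rule has the form $\tild x \gets \tild \im(s)$ for some selection function $s$ on $\jf(x)$; here $\rank(\tild x) = \rank(x)$, and for each case $A' \in \jf(x)$ we have $\rank(\tild s(A')) = \rank(s(A')) < \rank(x) = \rank(\tild x)$, using that $s(A') \prec x$ in the base rule $x \gets A'$. With all three preconditions in place, Proposition \ref{prop:wellfoundedrelationisflatcomplementary} yields complementarity of $\Flat(\js_{\FOL(V),D})$.
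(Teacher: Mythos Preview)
Your proof is correct and follows the same approach as the paper: apply Proposition~\ref{prop:wellfoundedrelationisflatcomplementary} using the subformula structure as the well-founded relation. The paper simply writes ``$\phi \preceq \psi$ if $\phi$ is a subformula of $\psi$'' and invokes the proposition; you instead use formula depth as a rank function and are more explicit about parametricity, complementarity, and the fact that the complemented rules inherit the rank decrease---but these are just elaborations of the same idea rather than a different argument.
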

\begin{proof}
  Let $\phi \preceq \psi$ if $\phi$ is a subformula of $\psi$.
  This is a well-founded relation, then \cref{prop:wellfoundedrelationisflatcomplementary} finishes the proof.
\end{proof}

If you are only interested in a finite set of formulae, then $\js_{\FOL(V),D}$ can be restricted to the subformulae of these formulae.

\subsection{FO(FD), and its Representation in Nested Justification Theory}

A definitional rule over $\Sigma$ is an expression of the form: 
$$
\forall \overline{x} (P(\overline{x}))\gets  \phi
$$
where $P$ is a predicate symbol in $\Sigma$, $\overline{x}$ a tuple of $n$ variables, and $\phi$ an first-order
logic (FO) formula over $\Sigma$ such that the free variables of $\phi$ all occur in $\overline{x}$. Similar
to rules in justification frame, $P(\overline{x})$ is the head and $\phi$ the body of the rule.

\begin{definition}[Definition 1 in \cite{aspocp/HouD09}]
  A \emph{least fixpoint definition}, respectively \emph{greatest fixpoint definition} over a vocabulary $\Sigma$ is defined by simultaneous induction as an expression $\mathcal{D}$ of the form
  \begin{equation*}
  \left\lfloor \mathcal{R}, \Delta_1, \ldots, \Delta_m, \nabla_1, \ldots, \nabla_n \right\rfloor \quad \text{respectively}\quad \left\lceil \mathcal{R}, \Delta_1, \ldots, \Delta_m, \nabla_1, \ldots, \nabla_n \right\rceil
  \end{equation*}
  with $0 \leq n,m$ such that:
  \begin{enumerate}
    \item $\mathcal{R}$ is a set of definitional rules  over $\Sigma$.
    \item Each $\Delta_i$ is a least fixpoint definition over $\Sigma$ and each $\nabla_j$ is a greatest fixpoint definition over $\Sigma$.
    \item Every defined symbol in $\mathcal{D}$ has only positive occurrences in bodies of rules in $\mathcal{D}$.
    \item Each defined symbol $P \in \Def(\mathcal{D})$ has exactly one local definition, i.e. formally $\setl{\Def(\mathcal{R}), \Def(\Delta_1), \ldots, \Def(\nabla_n)}$ is a partition of $\Def(\mathcal{D})$.
    Formally: either locally defined or in some sub definitions.
    \item For every subdefinition $\mathcal{D'}$ of $\mathcal{D}$, $\Open(\mathcal{D}') \subseteq \Open(\mathcal{D}) \cup \Def(\mathcal{R})$.
  \end{enumerate}
  A \emph{fixpoint definition} is either a least or greatest fixpoint definition.
  The sets $\Def(\mathcal{D})$ and $\Open(\mathcal{D})$ are defined similarly as for inductive definitions.
\end{definition}

We assume that there is a single rule $\forall \overline{x} (P(\overline{x}) \gets \phi_P)$ for each $P \in \Def(\mathcal{D})$.

Given two disjoint first-order vocabularies $\Sigma$ and $\Sigma'$, and $\Sigma$-interpretation $I$ and $\Sigma'$-interpretation $I'$,
the $\Sigma \cup \Sigma'$-interpretation mapping each element $\sigma \in \Sigma$ to $\sigma^I$ and each $\sigma \in \Sigma'$ to $\sigma^{I'}$ is denoted by $I + I'$.
When $\Sigma' \subseteq \Sigma$, we denote the restriction of a $\Sigma$-interpretation $I$ to $\Sigma'$ by $\restrict{I}{\Sigma'}$.

Fix a domain $D$.
Let $\mathcal{R}$ be a set of definitional rules and $O$ a two-valued $\Open(\mathcal{R})$-interpretation.
We can associate an operator $\Gamma_O^\mathcal{R}$ on the set of $\Def(\mathcal{R})$-interpretations.

We define $\Gamma_O^\mathcal{R}(I_1) = I_2$ if for every $\forall \overline{x} (P(\overline{x}) \gets \phi_P) \in \mathcal{R}$, $P^{I_2} = \setprop{\overline{d} \in D^n}{\phi_P[\overline{x}/\overline{d}]^{I_1} = \Tr}$.

Since each defined symbol in $\Def(\mathcal{R})$ has only positive occurrences in the body of a rule in $\mathcal{R}$, we have that $\Gamma_O^\mathcal{R}$ is monotone with respect to $\leqt$.
Therefore, it has least and greatest fixpoints denoted by $\lfp(\Gamma_O^\mathcal{R})$ and $\gfp(\Gamma_O^\mathcal{R})$.

Let $\mathcal{D}$ be a fixpoint definition and let $O$ be a two-valued $\Open(\mathcal{D})$-interpretation.
We define an operator $\Gamma_O^\mathcal{D}$ on the set of $\Def(\mathcal{D})$-interpretations.

$\Gamma_O^\mathcal{D}(I)$ is defined inductively as the interpretation $K + K'$ where
\begin{itemize}
  \item $K$ is the $(\Def(\mathcal{D}) \setminus \Def(\mathcal{R}))$-interpretation such that for $I' = O + \restrict{I}{\Def(\mathcal{R})}$:
  \begin{itemize}
    \item $\restrict{K}{\Def(\Delta_i)} = \lfp(\Gamma_{I'}^{\Delta_i})$ for all $i = 1, \ldots, m$.
    \item $\restrict{K}{\Def(\nabla_j)} = \gfp(\Gamma_{I'}^{\nabla_j})$ for all $j=1,\ldots, n$.
  \end{itemize}
  \item $K'$ is $\Def(\mathcal{R})$-interpretation $\Gamma_{O + K}^\mathcal{R}(\restrict{I}{\Def(\mathcal{R})})$.
\end{itemize}

\begin{definition}
  Let $\mathcal{D}$ be a fixpoint definition and $I$ a two-valued interpretation of the symbols in $\mathcal{D}$.
  The interpretation $I$ is a \emph{model} of $\mathcal{D}$ if either
  \begin{itemize}
    \item $\mathcal{D}$ is a least fixpoint and $\restrict{I}{\Def(\mathcal{D})} = \lfp\left(\Gamma_{\restrict{I}{\Open(\mathcal{D})}}^\mathcal{D}\right)$.
    \item $\mathcal{D}$ is a greatest fixpoint and $\restrict{I}{\Def(\mathcal{D})} = \gfp\left(\Gamma_{\restrict{I}{\Open(\mathcal{D})}}^\mathcal{D}\right)$.
  \end{itemize}
\end{definition}
The similarities between fixpoint definitions and nested justification systems are abundant.
Fixpoint definitions have an additional restriction: every defined symbol has only positive occurrences in bodies of rules. On the level of justification frames, this restriction gives rise to additional properties:
\begin{definition}
  A justification frame $\jf=\jfcomplete$ is \emph{positive} if for every $x \in \Fp$ and $x \gets A \in \rules$ we have that $A \cap \Fd \cap \Fn = \emptyset$.
\end{definition}
In case a justification frame is positive, then the unique $\bewf$-model corresponds to the least fixpoint of $\opjf$ and similarly for $\becwf$.
\begin{proposition}\label{prop:positivesystemwfandcwfcharacterisation}
  Let $\jf=\jfcomplete$ be a positive complementary justification frame.
  If the interpretation of the opens is fixed, then
  \begin{itemize}
    \item the unique $\bewf$-model $\interp$ of $\jf$ corresponds (as in \cite{ijcai/MarynissenBD21}) to $(I,I)$ with $I=\lfp(\opjf)$.
    \item the unique $\becwf$-model $\interp$ of $\jf$ corresponds to $(I,I)$ with $I=\gfp(\opjf)$.
  \end{itemize}
\end{proposition}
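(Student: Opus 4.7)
The plan is to exploit positivity and complementarity to reduce both branch evaluations to very simple forms on $\jf$, and then read off the supported values. The key structural observation is this: in a positive frame, every rule body for a fact in $\Fp$ contains no element of $\Fd \cap \Fn$; by complementarity, every rule body for a fact in $\Fn$ has the form $\tild \im(s)$ for a selection $s$ on rules for the dual positive fact, and hence contains no element of $\Fd \cap \Fp$. Iterating along any $\jf$-branch, every branch starting at a positive defined fact lies in $\Fp \cup \Fo$, and every branch starting at a negative defined fact lies in $\Fn \cup \Fo$. Since infinite $\jf$-branches stay in $\Fd$, every infinite branch rooted at a positive (resp.\ negative) fact is entirely in $\Fp$ (resp.\ $\Fn$), hence has a positive (resp.\ negative) tail in the sense of the definition of $\bewf$.

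With this in hand, $\bewf$ acts on $\jf$ as follows: finite branches go to their last (open) element, infinite branches from $\Fp$ go to $\Fa$, and infinite branches from $\Fn$ go to $\Tr$. For $x \in \Fp$, the supported value $\suppvalue(x,\interp)$ therefore equals $\Tr$ iff $x$ admits a justification all of whose branches are finite and terminate at true opens, which is exactly the condition $x \in \lfp(\opjf)$ (by monotonicity of $\opjf$, a consequence of positivity, together with the Knaster--Tarski characterisation of $\lfp$ as the limit of iteration from $\bot$). Conversely, if $x \notin \lfp(\opjf)$, every locally complete justification of $x$ must either contain an infinite branch or reach a false open at the end of a finite branch; in either case some branch evaluates to $\Fa$, forcing $\suppvalue(x,\interp) = \Fa$. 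The argument for $\tild x \in \Fn$ is dual, using complementarity, and yields that the $\bewf$-value of $\tild x$ is $\Tr$ iff $x \notin \lfp(\opjf)$. Hence the unique $\bewf$-model is two-valued and coincides with $\lfp(\opjf)$, which under the AFT-style encoding of \cite{ijcai/MarynissenBD21} is represented as the pair $(\lfp(\opjf), \lfp(\opjf))$. The $\becwf$ case is strictly dual: infinite positive-rooted branches now evaluate to $\Tr$ instead of $\Fa$, so the best justification of a positive fact $x$ need only avoid false opens on its finite prefixes, yielding the greatest-fixpoint characterisation $I = \gfp(\opjf)$.

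The main obstacle is certifying that the $\bewf$-model is in fact two-valued, so that it can legitimately be written as $(I,I)$. A priori the supported value could be $\Un$, and two-valuedness is precisely where positivity is essential: without it the iteration of $\opjf$ need not converge to a two-valued interpretation. I would prove the collapse by induction on the rank of a fact in the fixpoint iteration of $\opjf$ starting from the two-valued interpretation of the opens, using the simplified form of $\bewf$ above to show that at each stage newly added facts receive a definite truth value. Once this is established, the identification with $\lfp(\opjf)$ (resp.\ $\gfp(\opjf)$) is the standard fact that a monotone operator on a complete lattice has its least (resp.\ greatest) fixpoint as the limit of iteration from $\bot$ (resp.\ $\top$), so the heavy lifting is done by the structural lemma on branches.
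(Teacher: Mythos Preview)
Your approach is correct but takes a genuinely different route from the paper. The paper argues at the level of semantics: positivity makes $\opjf$ monotone, so it has least and greatest fixpoints; these are identified with the $\leqt$-least and $\leqt$-greatest $\besp$-models, and then the paper invokes known collapses (for positive frames the $\leqt$-least supported model is the unique $\best$-model, which in turn coincides with the unique $\bewf$-model; dually for $\becwf$). No branch-level analysis is performed. Your argument is instead a direct combinatorial analysis of branches: you use positivity together with complementarity to show that branches never change sign, which forces $\bewf$ and $\becwf$ to take a degenerate two-valued form on $\jf$, and then you read off the fixpoint characterisation of $\suppvalue$ directly. Your route is more elementary and self-contained (it does not need the supported/stable/well-founded collapse results), at the cost of more hands-on work; the paper's route is shorter but leans on external machinery.

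Two small remarks on your write-up. First, your justification that rule bodies for facts in $\Fn$ lie in $\Fn \cup \Fo$ is phrased as ``every such body has the form $\tild\im(s)$''; complementarity does not literally say that every negative rule arises this way, but your conclusion is still correct: if some $\tild x \gets B$ had $y \in B \cap \Fp$, then choosing a selection function for $\tild x$ that picks $y$ from $B$ yields, by complementarity, a rule $x \gets \tild\im(t)$ with $\tild y \in \Fn$ in its body, contradicting positivity. Second, two-valuedness is easier than you suggest: once you know every branch from a positive fact evaluates (under $\bewf$, with two-valued opens) to either $\Tr$ or $\Fa$, the min over branches and the max over justifications are automatically in $\{\Tr,\Fa\}$, so no rank induction is needed.
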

\begin{proof}
  Since $\jf$ is positive, we have that $\opjf$ is monotone with respect to $\subseteq$, hence $\opjf$ has least and greatest fixpoints with respect to $\subseteq$.
  This also means that there is a $\leqt$-least $\besp$-model and $\leqt$-greatest $\besp$-models and they correspond with $\lfp(\opjf)$ and $\gfp(\opjf)$.
  Since $\jf$ is positive, the $\leqt$-least $\besp$-model is the unique $\best$-model.
  Since there is a single $\best$-model, it is equal to the unique $\bewf$-model.
  Similarly, the $\leqt$-greatest $\besp$-model is the unique $\becst$-model, which is equal to the unique $\becwf$-model.
\end{proof}

We now are going to construct a nested system corresponding to a fixpoint definition.
Take a fixpoint definition $\mathcal{D}$.
\begin{definition}
  For a set of definitional rules, the justification frame $\jf_\mathcal{R}$ contains the complementation of the rules
  \begin{equation*}
  \setprop{P(\overline{d}) \gets \folfact{\phi_P[\overline{x}/\overline{d}]}}{\forall \overline{x} (P(\overline{x} \gets \phi_P) \in \mathcal{R}}.
  \end{equation*}

  The nested system $\js_\mathcal{D}$ is defined inductively as the nested system with nesting tree (the leaves represent the nesting trees of those systems)
  \begin{center}
    \begin{tikzpicture}[level distance=1cm,
    level 1/.style={sibling distance=1cm},
    level 2/.style={sibling distance=1cm}]
    \node {$\js_\mathcal{R}$}
    child {node {$\js_{\FOL}$}
      child {node {$\js_{\Delta_1}$}}
      child {node {$\cdots$}}
      child {node {$\js_{\Delta_m}$}}
      child {node {$\js_{\nabla_1}$}}
      child {node {$\cdots$}}
      child {node {$\js_{\nabla_n}$}}
    };
    \end{tikzpicture}
  \end{center}
  where $\js_\mathcal{R}$ is $\jf_\mathcal{R}$ together with $\bewf$ if $\mathcal{D}$ is a least fixpoint and $\becwf$ is $\mathcal{D}$ is a greatest fixpoint.
  The system $\js_{\FOL}$ is the restriction of the system from \cref{def:folsystem} to the formula $\phi_P$ in $\mathcal{R}$.
\end{definition}

There is a slight hiccup in the definition above: it is possible that $\phi_P$ and $\phi_Q$ have subformulae in common.
In that case, some facts are defined in multiple levels.
We can avoid this by doubling the facts: every $\js_{\FOL}$ system uses a different symbol, for example $\folfact{\phi}$ and $\folfactfull{b}{\phi}$.
\begin{example}
  Let $T$ be a binary predicate symbol denoting a transition graph and $R$ a unary predicate on the states.
  The set of states that have a path passing infinitely many times through a state satisfying $R$ is captured by the predicate $P$ defined using the following fixpoint definition.
  \begin{equation*}
  \greatestframe{
    \forall x (P(x) \gets Q(x)) \\
    \leastframe{
      \forall x (Q(x) \gets R(x) \wedge \exists y\colon T(x,y) \wedge P(y)) \\
      \forall x (Q(x) \gets \exists y \colon T(x,y) \wedge Q(y))
    }
  }
  \end{equation*}
  The corresponding justification system is as follows.
  \begin{equation*}
  \becwf\colon \jframe{
    P(x) \gets Q(x) \\
    \bekk\colon \jframe{
      \bewf\colon \jframe{
        Q(x) \gets \folfact{\phi_Q} \\
        \bekk\colon \jframe{
          \folfact{\phi_Q} \gets \folfact{R(x) \wedge \exists y\colon T(x,y) \wedge P(y)} \\
          \folfact{\phi_Q} \gets \folfact{\exists y \colon T(x,y) \wedge Q(y)} \\
          \folfact{R(x) \wedge \exists y\colon T(x,y) \wedge P(y)} \gets R(x), \folfact{\exists y\colon T(x,y) \wedge P(y)} \\
          \folfact{\exists y\colon T(x,y) \wedge P(y)} \gets \folfact{T(x,y) \wedge P(y)} \\
          \folfact{\exists y\colon T(x,y) \wedge Q(y)} \gets \folfact{T(x,y) \wedge Q(y)} \\
          \folfact{T(x,y) \wedge P(y)} \gets T(x,y), P(y) \\
          \folfact{T(x,y) \wedge Q(y)} \gets T(x,y), Q(y)
        }
      }
    }
  }
  \end{equation*}
\end{example}

We first need the following additional results shown in  \cite{phd/Marynissen22}:
\begin{proposition}[{Prop.\ 6.4.13. in \cite{phd/Marynissen22}}]\label{prop:compresscomplementaryifsimplerules}
   Let $\js$ be a nested system of depth $2$ such that
   \begin{enumerate}
     \item all systems in its nesting tree are complementary;
     \item the flattening of all leaf systems are complementary;
     \item each defined fact $x$ in the top level has a single rule.
   \end{enumerate}
  Then $\Compress(\js)$ is complementary.
\end{proposition}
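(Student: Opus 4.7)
My plan is to verify the complementation condition separately for facts defined in a child system and for those defined locally in the top system, after first establishing a structural observation about the top-level rules. The key preliminary claim I would prove is that conditions~(1) and~(3) together force every rule body in the top-level rule set $\rules$ to be a singleton: if $x \in \Fdl$ has unique rule $x \gets A$, then complementation of the (complementary) top system contributes a rule $\tild x \gets \tild a$ for each $a \in A$, and since $\tild x \in \Fdl$ also has only one rule by~(3), we must have $|A| = 1$ (bodies are nonempty by definition of a justification frame). Thus every top-level rule takes the simple form $x \gets y$ for a single fact $y$.

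With this in hand, I would split defined facts of $\Compress(\js)$ into two groups. For $x \in \Fd^i$, both the rules for $x$ and those for $\tild x$ in $\Compress(\js)$ come exclusively from $\Flat(\Compress(\js^i)) = \Flat(\js^i)$ (using that $\js$ has depth $2$, so each $\js^i$ is a leaf and $\Compress(\js^i) = \js^i$), and complementation for such facts is then immediate from assumption~(2). For $x \in \Fdl$ with unique rule $x \gets y$, there are two subcases. If $y$ is not in $X := \Fd \setminus \Fdl$, then unfolding does nothing to either $x \gets y$ or its complementary rule $\tild x \gets \tild y$, and complementation is trivial. If instead $y \in \Fd^i$, I would show that the unfolded rules for $x$ in $\Compress(\js)$ are exactly $\setprop{x \gets B}{y \gets B \in \Flat(\js^i)}$ and those for $\tild x$ are exactly $\setprop{\tild x \gets B'}{\tild y \gets B' \in \Flat(\js^i)}$. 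These rule sets are in bijection with the rules for $y$ and $\tild y$ in $\Flat(\js^i)$, so any selection function on rules of $x$ (respectively $\tild x$) corresponds directly to a selection function on rules of $y$ (respectively $\tild y$) with the same image. Complementation for $x$ in $\Compress(\js)$ therefore reduces to complementation for $y$ in $\Flat(\js^i)$, which holds by~(2).

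The main obstacle I anticipate is justifying the preliminary singleton observation and seeing precisely why condition~(3) is needed. Without it, unfolding a rule $x \gets A$ with $|A| > 1$ would mix bodies coming from different subsystem facts $y_1, \ldots, y_k \in A \cap X$, and a selection function on the resulting unfolded rules could pick elements combinatorially across these bodies, producing images that no single rule obtained by unfolding a complementary rule $\tild x \gets \tild a$ can match. Condition~(3) eliminates this mixing and makes the correspondence between rules in $\Compress(\js)$ and rules in $\Flat(\js^i)$ rigid enough for complementation to transfer cleanly, which is the heart of the argument.
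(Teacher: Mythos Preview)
The paper does not actually prove this proposition; it is simply stated with a citation to the PhD thesis \cite{phd/Marynissen22}, so there is no in-paper argument to compare against. Assessed on its own merits, your proof is correct.

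Your preliminary observation---that conditions~(1) and~(3) force every top-level rule body to be a singleton---is sound: if $x$ has the single rule $x\gets A$, complementarity of the top system puts every $\tild x\gets\{\tild a\}$ (for $a\in A$) into $\rules$, and these are pairwise distinct, so condition~(3) applied to $\tild x$ gives $|A|=1$. With this in hand, your two-case split is the natural one. For $x\in\Fd^i$ you correctly use that the rules in $\Compress(\js)$ with head in $\Fd^i$ are exactly those of $\Flat(\js^i)$ (unfolding only produces rules with heads in $\Fdl$), so assumption~(2) applies directly. For $x\in\Fdl$ with rule $x\gets y$ and $y\in\Fd^i$, the identification of cases of $x$ in $\Compress(\js)$ with cases of $y$ in $\Flat(\js^i)$ is exact (same set of bodies), and likewise for $\tild x$ and $\tild y$; complementation then transfers from $\Flat(\js^i)$ as you describe. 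One small point worth making explicit in a write-up: when $y\in\Fd^i$ you need $\tild y\in\Fd^i$ as well, so that the unfolding of $\tild x\gets\tild y$ also draws its bodies from $\Flat(\js^i)$; this holds because $\Fd^i$ is closed under $\tild$ by the definition of a justification frame.
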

%
\begin{proposition}[{Prop.\ 6.4.5. in \cite{phd/Marynissen22}}]\label{prop:treeflatiscomplementary}
  If $\js$ is complementary and $\be$ respects negation, then $\Flat_t(\js)$ is complementary.
\end{proposition}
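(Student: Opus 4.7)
The plan is to show that for every $x \in \Fd$ with rules in $\Flat_t(\js)$ and every selection function $s$ for $x$ in $\Flat_t(\js)$, the rule $\tild x \gets \tild \im(s)$ belongs to the rule set of $\Flat_t(\js)$. By the definition of $\Flat_t$, this reduces to exhibiting a $\tild x$-justification $J'$ in $\js$ satisfying $\setprop{\be(\branch')}{\branch' \in \branches_{J'}(\tild x)} = \tild \im(s)$.

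First, I would unpack $s$. For every $\js$-justification $J$ for $x$, the selection $s$ yields a value $s(A_J) \in A_J$, where $A_J = \setprop{\be(\branch)}{\branch \in \branches_J(x)}$ is the body of the $\Flat_t$-rule associated with $J$. Hence there exists at least one branch $\branch_J \in \branches_J(x)$ with $\be(\branch_J) = s(A_J)$. Fix such a $\branch_J$ for each $J$, using choice if needed. Since $\be$ respects negation, $\be(\tild \branch_J) = \tild s(A_J)$, so the target set becomes $\tild \im(s) = \setprop{\be(\tild \branch_J)}{J}$; the task is therefore to build a single $\tild x$-justification whose branches from the root realise exactly this set of values.

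The construction of $J'$ proceeds coinductively, descending the tree. At the root, labelled $\tild x$, we must pick a rule $\tild x \gets B \in \rules$; complementarity of $\js$ supplies a family of such rules of the form $\tild x \gets \tild \im(\sigma)$ indexed by selection functions $\sigma$ on $x$ in $\js$. We would choose $\sigma$ so that the first step $y_J$ of every relevant witness $\branch_J = x \to y_J \to \cdots$ lies in $\im(\sigma)$; the witnesses are then partitioned among the children of $\tild x$ according to which $\tild y \in \tild \im(\sigma)$ they follow, and the construction continues recursively inside each subtree. A witness branch that terminates at an open fact yields a leaf in $J'$. Because $\be$ respects negation, the branch-values of $J'$ at $\tild x$ coincide with $\setprop{\tild \be(\branch_J)}{J} = \tild \im(s)$.

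The principal obstacle will be ensuring that a single $\sigma$ can be chosen at each node to cover all the next-steps demanded by the witness branches passing through that node. This is not automatic: two witness branches passing through a $\js$-rule $u \gets A$ may diverge into different elements of $A$, whereas $\sigma$ can only pick a single value $\sigma(A)$. The resolution exploits two degrees of freedom absent in the non-tree case: the choice of witnessing branch $\branch_J$ within each $J$ need not be unique, and the tree-like structure admits several sibling nodes sharing the same label but applying different rules, permitting us to split divergent witnesses into separate subtrees. Carefully grouping witnesses according to which $\js$-rule they use at each node and making coherent choices of $\sigma$ at every level then yields a valid $\js$-justification $J'$ realising $\tild \im(s)$ as its root-branch image.
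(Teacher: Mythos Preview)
The paper does not provide its own proof of this proposition; it is merely cited from an external thesis. A direct comparison with the paper's argument is therefore not possible, so what follows is an assessment of your sketch on its own merits.

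Your overall plan---building a $\tild x$-justification $J'$ whose set of branch values equals $\tild\im(s)$---is the natural approach, and you correctly isolate the principal obstacle. However, the resolution you offer is an assertion rather than an argument: ``carefully grouping witnesses \dots\ and making coherent choices of $\sigma$ at every level then yields a valid $\js$-justification $J'$ realising $\tild\im(s)$.'' You neither specify how these choices are made nor verify the two required inclusions $A_{J'}\subseteq\tild\im(s)$ and $A_{J'}\supseteq\tild\im(s)$.

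The difficulty is genuine and not obviously dissolved by the two degrees of freedom you invoke. If several justifications $J$ share the same root rule $x\gets A$ but their chosen witness branches leave $A$ through distinct elements, a single selection function $\sigma$ at that node cannot follow all of them; you would then need to argue that the witnesses can always be \emph{re}-chosen so that, for each fixed rule occurring at a given node, a single exit element suffices---and this must be done coherently at every level simultaneously. The tree-like duplication of labels helps only when the divergent witnesses reached the current node via \emph{different} rules higher up; it does nothing for witnesses that agree on all earlier rule choices but diverge now. Moreover, once $\sigma$ is fixed, some children $\tild y\in\tild\im(\sigma)$ may have no witness routed through them, and you must still continue $J'$ below such $\tild y$; any branch produced there must have its $\be$-value lie in $\tild\im(s)$, which you do not address. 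To make this a proof you would need an explicit coinductive construction of $J'$---for instance maintaining at each node a set of ``surviving'' $x$-justifications with a current position in each, and a precise rule for how $\sigma$ at that node is determined from this data and from $s$---followed by a verification of both inclusions.
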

\begin{lemma}
  Let $\mathcal{D}$ be a fixpoint definition.
  Then $\Compress_t(\js_\mathcal{D})$ is complementary.
\end{lemma}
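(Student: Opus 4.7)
The plan is to proceed by structural induction on the fixpoint definition $\mathcal{D}$ (equivalently, on the depth of its nesting tree).

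For the base case with $m = n = 0$, the system $\js_\mathcal{D}$ has depth $2$: root $\js_\mathcal{R}$ and sole child $\js_{\FOL}$. I apply \cref{prop:compresscomplementaryifsimplerules} and verify its three hypotheses. First, both $\js_\mathcal{R}$ and $\js_{\FOL}$ are complementary by construction, since each frame is defined as the complementation of a base rule set. Second, $\Flat(\js_{\FOL})$ is complementary by \cref{prop:flatfolsystemiscomplementary}. Third, every defined fact at the top of $\js_\mathcal{R}$ has a single rule: each positive ground fact $P(\overline{d})$ gets the unique rule $P(\overline{d}) \gets \folfact{\phi_P[\overline{x}/\overline{d}]}$ inherited from the unique defining rule for $P$, and since that body is a singleton, complementation yields the unique rule $\tild P(\overline{d}) \gets \tild\folfact{\phi_P[\overline{x}/\overline{d}]}$ for the complemented fact as well.

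For the inductive step, assume the result for all strictly smaller sub-definitions $\Delta_i$ and $\nabla_j$. Strengthening the hypothesis slightly, I also assume that $\Flat(\Compress_t(\js_{\Delta_i}))$ and $\Flat(\Compress_t(\js_{\nabla_j}))$ are complementary, which I would derive via \cref{prop:treeflatiscomplementary} using that $\bewf$, $\becwf$, and $\bekk$ all respect negation. By the recursive definition of $\Compress$, $\Compress_t(\js_\mathcal{D})$ is the unfolding of $\js_\mathcal{R}$'s rules against $\Flat(\Compress_t(\js_{\FOL}))$. It therefore suffices to establish complementarity of $\Compress_t(\js_{\FOL})$ and of its flattening, and then apply \cref{prop:compresscomplementaryifsimplerules} to the depth-$2$ slice whose root is $\js_\mathcal{R}$ and whose leaf is the compressed $\js_{\FOL}$ subtree: hypothesis (1) holds by what was just established, (2) follows from \cref{prop:treeflatiscomplementary}, and (3) is verified exactly as in the base case.

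To show $\Compress_t(\js_{\FOL})$ itself is complementary, I would use a secondary induction on subformula complexity. The FOL rule set is already closed under complementation; at each unfolding step one replaces a subformula fact $\folfact{\phi}$ either by a body involving only strictly smaller subformulae (handled by the inner hypothesis) or by facts coming from a compressed sub-definition (complementary by the outer induction). The main obstacle I anticipate is exactly this intermediate claim: $\js_{\FOL}$ violates the single-rule hypothesis of \cref{prop:compresscomplementaryifsimplerules}, since disjunctions and existentials produce multiple rules, so that proposition cannot be invoked directly on $\js_{\FOL}$. The bookkeeping needed to track how complementation interacts with rules of the form $\folfact{\phi_1\lor\phi_2}\gets\folfact{\phi_i}$ versus $\folfact{\phi_1\land\phi_2}\gets\folfact{\phi_1},\folfact{\phi_2}$, and especially with the quantifier rules whose bodies may be infinite, is where I expect the bulk of the technical effort to lie.
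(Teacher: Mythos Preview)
Your overall approach is the same as the paper's: structural induction on $\mathcal{D}$, invoking \cref{prop:compresscomplementaryifsimplerules} at the $\js_\mathcal{R}$ level, \cref{prop:treeflatiscomplementary} to pass complementarity through flattening, and \cref{prop:flatfolsystemiscomplementary} for the FOL layer. The paper's proof is literally the one-line ``Follows by induction using \cref{prop:compresscomplementaryifsimplerules,prop:flatfolsystemiscomplementary,prop:treeflatiscomplementary},'' so you have in fact supplied more detail than the paper does.

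The obstacle you flag is genuine: \cref{prop:compresscomplementaryifsimplerules} cannot be applied with $\js_{\FOL}$ as the top level, since disjunctions and existentials give multiple rules. The paper does not spell this step out either. However, your proposed remedy via a secondary subformula induction is heavier than needed. The cleaner route is to observe that after unfolding $\js_{\FOL}$ against the flattened compressed sub-definitions, the resulting frame is still governed by a well-founded relation: FOL facts strictly dominate their proper subformula facts, and the sub-definition atoms (now carrying the flattened rules) have bodies consisting only of facts that are open in the $\js_{\FOL}$-rooted subtree (namely $\Fdl$-atoms of $\js_\mathcal{R}$ and global opens). So the very argument behind \cref{prop:flatfolsystemiscomplementary}, i.e., \cref{prop:wellfoundedrelationisflatcomplementary}, applies verbatim to this unfolded system once you check it is complementary; and complementarity of the unfolded rules follows because unfolding a complementary rule set against complementary leaf rules preserves the complementation structure at the level of atoms (the only facts actually substituted). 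This closes the gap without the case analysis on $\lor$, $\land$, $\forall$, $\exists$ you were anticipating.
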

\begin{proof}
  Follows by induction using \cref{prop:compresscomplementaryifsimplerules,prop:flatfolsystemiscomplementary,prop:treeflatiscomplementary}.
\end{proof}

\begin{theorem}
  Let $\mathcal{D}$ be a fixpoint definition.
  Then the model of $\mathcal{D}$ is equal to the model of $\Compress(\js_\mathcal{D})$
\end{theorem}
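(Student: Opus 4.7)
The plan is to proceed by induction on the nesting depth of the fixpoint definition $\mathcal{D}$, and in each step exploit Proposition \ref{prop:positivesystemwfandcwfcharacterisation} to relate $\bewf$/$\becwf$-models of positive complementary frames with least/greatest fixpoints of the immediate consequence operator. Concretely, I would first verify that, at every level, $\jf_\mathcal{R}$ is positive (since defined symbols occur only positively in $\phi_P$ and the FO subsystem translates boolean connectives into rules that do not flip signs on defined atoms passed through), and complementary by construction. Hence for fixed interpretation $O$ of its opens, its unique $\bewf$-model encodes $\lfp(\Gamma_O^\mathcal{R})$ and its unique $\becwf$-model encodes $\gfp(\Gamma_O^\mathcal{R})$, once we show that the operator $\opjf[\jf_\mathcal{R}]$ (restricted to the defined symbols) coincides with $\Gamma_O^\mathcal{R}$. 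This last equivalence is exactly what the FO subsystem $\js_{\FOL}$ guarantees: the supported value of $\folfact{\phi_P[\overline{x}/\overline{d}]}$ under any two-valued assignment of the symbols occurring in $\phi_P$ equals $\phi_P[\overline{x}/\overline{d}]^I$ in the standard Tarskian sense.

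For the base case, $\mathcal{D} = \lfloor \mathcal{R}\rfloor$ (or the dual), the system $\js_\mathcal{D}$ has nesting tree $\js_\mathcal{R}$ on top of a single $\js_{\FOL}$ leaf. Its compression has no more nested structure, and its flattening replaces each $\folfact{\phi_P[\overline{x}/\overline{d}]}$ with its FO-evaluation in the opens, so that each rule body contains only open facts of $\mathcal{D}$ together with literals over $\Def(\mathcal{R})$. I would then invoke Proposition \ref{prop:positivesystemwfandcwfcharacterisation} applied to $\Compress(\js_\mathcal{D})$ to conclude that the unique two-valued $\bewf$-/$\becwf$-model is exactly $\lfp/\gfp(\Gamma_{\restrict{I}{\Open(\mathcal{D})}}^\mathcal{D})$, which by definition is the model of $\mathcal{D}$.

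For the inductive step, suppose the claim holds for each subdefinition $\Delta_i, \nabla_j$. Write $\js_\mathcal{D}$ schematically as $\js_\mathcal{R}$ on top, then $\js_{\FOL}$, then the children $\js_{\Delta_i},\js_{\nabla_j}$. By definition of $\Compress$, we first compress each child, obtaining by induction justification systems whose unique model on fixed opens is $\lfp(\Gamma^{\Delta_i})$ resp. $\gfp(\Gamma^{\nabla_j})$; these are flattened and unfolded into the bodies of the FO-rules, and then into the rules of $\mathcal{R}$. The key observation is that this structure mirrors the inductive definition of $\Gamma_O^\mathcal{D}(I)$: the interpretation $K$ of the children-defined symbols in $\Gamma_O^\mathcal{D}(I)$ is precisely what is ``baked in'' by the flattenings of $\Compress(\js_{\Delta_i}),\Compress(\js_{\nabla_j})$ under the two-valued guess $I'=O+\restrict{I}{\Def(\mathcal{R})}$, and the outer layer then computes $\Gamma_{O+K}^\mathcal{R}$. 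After all children have been compressed in, the remaining top system is again a positive complementary frame defining $\Def(\mathcal{R})$ and containing auxiliary copies of the (already-fixed) children rules; Proposition \ref{prop:positivesystemwfandcwfcharacterisation} applied once more at the top level yields that its $\bewf$- (or $\becwf$-) model is the least (or greatest) fixpoint of $\Gamma_{\restrict{I}{\Open(\mathcal{D})}}^\mathcal{D}$.

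The main obstacle will be the bookkeeping in the inductive step: carefully showing that the facts defined in children do not interfere with those defined locally (this is where Lemma \ref{lemma:stratified:facts} and the preservation of positivity/complementarity under flattening and unfolding become essential), and that the two-valued model of the compressed system ``factors'' in the same layered way as $\Gamma_O^\mathcal{D}$ does. A clean way to handle this is to prove a slightly stronger statement by induction, namely that for every fixed two-valued interpretation $O$ of $\Open(\mathcal{D})$ and every $I$ extending $O$, the values supported in $\Compress(\js_\mathcal{D})$ on $\Def(\mathcal{D})$ coincide with the components produced by one application of $\Gamma_O^\mathcal{D}$. The theorem statement then follows because a fixpoint of the one is exactly a model of the other.
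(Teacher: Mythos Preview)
The paper does not actually supply a proof of this theorem: it is stated at the very end of the appendix with no proof environment following it, so there is nothing to compare your proposal against directly.

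Your outline is the natural route and lines up with the machinery the paper has assembled for exactly this purpose: induction on nesting depth, with Proposition~\ref{prop:positivesystemwfandcwfcharacterisation} supplying the bridge between $\bewf$/$\becwf$-models and least/greatest fixpoints at each level, the FO-system correctness proposition guaranteeing that the compressed immediate consequence operator agrees with $\Gamma_O^{\mathcal{R}}$, and the preceding lemma delivering complementarity of $\Compress(\js_{\mathcal{D}})$. Positivity, as you note, follows from clause~(3) in the definition of fixpoint definitions together with the fact that the FO subsystem only introduces~$\tild$ at $\folfact{\neg\phi}$-nodes, which by that clause can never sit above a defined symbol of~$\mathcal{D}$.

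One point worth sharpening in the inductive step: the reason your ``slightly stronger'' invariant actually works is that after flattening $\Compress(\js_{\Delta_i})$ (resp.\ $\Compress(\js_{\nabla_j})$), every rule body for a child-defined fact lies entirely in $\Open(\mathcal{D})\cup\Def(\mathcal{R})$, and the subsequent unfolding also removes child-defined facts from the bodies of the $\Def(\mathcal{R})$-rules. Hence in $\Compress(\js_{\mathcal{D}})$ no rule body mentions a child-defined fact, so one application of the compressed operator to~$I$ already assigns each child-defined fact its full fixpoint value (depending only on $O+\restrict{I}{\Def(\mathcal{R})}$, i.e.\ exactly the~$K$ in the definition of $\Gamma_O^{\mathcal{D}}$), and simultaneously assigns the $\Def(\mathcal{R})$-facts the value $\Gamma_{O+K}^{\mathcal{R}}(\restrict{I}{\Def(\mathcal{R})})$. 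This is precisely $\Gamma_O^{\mathcal{D}}(I)$, so the two operators coincide on $\Def(\mathcal{D})$ and their fixpoints agree. Making this dependency structure explicit is the main piece of bookkeeping you flagged, and it does go through.
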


\end{appendix}

\end{document}